\DeclareFontFamily{OT1}{pzc}{}
\DeclareFontShape{OT1}{pzc}{m}{it}{<-> s * [1] pzcmi7t}{}
\DeclareMathAlphabet{\mathpzc}{OT1}{pzc}{m}{it}
\DeclarePairedDelimiter\abs{\lvert}{\rvert}%
\definecolor{codegreen}{rgb}{0,0.6,0}
\definecolor{codegray}{rgb}{0.5,0.5,0.5}
\definecolor{codepurple}{rgb}{0.58,0,0.82}
\definecolor{backcolour}{rgb}{0.95,0.95,0.92}
\lstdefinestyle{mystyle}{
    backgroundcolor=\color{backcolour},   
    numberstyle=\tiny\color{codegray},
    basicstyle=\footnotesize,
    breakatwhitespace=false,         
    breaklines=true,                 
    captionpos=b,                    
    keepspaces=true,                 
    numbers=left,                    
    numbersep=6pt,                  
    showspaces=false,                
    showstringspaces=false,
    showtabs=false,                  
    tabsize=2
}
\newtheorem{example}{Example}
\newtheorem{theorem}{Theorem}
\newtheorem{lemma}{Lemma}
\newtheorem{definition}{Definition}
\title{Sample Efficient Model-free Reinforcement Learning from \\LTL Specifications with Optimality Guarantees}
\author{
Daqian Shao
\And
Marta Kwiatkowska\\
\affiliations
 Department of Computer Science, University of Oxford, UK\\
\emails
\{daqian.shao, marta.kwiatkowska\}@cs.ox.ac.uk
}
\begin{document}

\maketitle

\begin{abstract}
Linear Temporal Logic (LTL) is widely used to specify high-level objectives for system policies, and it is highly desirable for autonomous systems to learn the optimal policy with respect to such specifications. However, learning the optimal policy from LTL specifications is not trivial. We present a model-free Reinforcement Learning (RL) approach that efficiently learns an optimal policy for an unknown stochastic system, modelled using Markov Decision Processes (MDPs). We propose a novel and more general product MDP, reward structure and discounting mechanism that, when applied in conjunction with off-the-shelf model-free RL algorithms, efficiently learn the optimal policy that maximizes the probability of satisfying a given LTL specification with optimality guarantees. We also provide improved theoretical results on choosing the key parameters in RL to ensure optimality. To directly evaluate the learned policy, we adopt probabilistic model checker PRISM to compute the probability of the policy satisfying such specifications. Several experiments on various tabular MDP environments across different LTL tasks demonstrate the improved sample efficiency and optimal policy convergence.
\end{abstract}

\newcommand{\realNumber}{\mathbb{R}}
\newcommand{\naturalNumber}{\mathbb{N}}
\newcommand{\integerNumber}{\mathbb{Z}}
\newcommand{\probP}{\mathds{P}}
\newcommand{\expectE}{\mathds{E}}

\newcommand{\vertices}{V}
\newcommand{\vertex}{v}
\newcommand{\edges}{E}
\newcommand{\edge}{e}

\newcommand{\MDP}{\mathcal{M}}
\newcommand{\states}{S}
\newcommand{\state}{s}
\newcommand{\actions}{A}
\newcommand{\action}{a}
\newcommand{\transitions}{T}
\newcommand{\reward}{r}
\newcommand{\ExpReward}{G}
\newcommand{\labFunc}{L}
\newcommand{\propositions}{\mathcal{AP}}
\newcommand{\alphabets}{\Sigma}
\newcommand{\alphabet}{\nu}
\newcommand{\policy}{\pi}
\newcommand{\discount}{\gamma}
\newcommand{\Value}{v}
\newcommand{\paths}{Paths^\MDP}
\newcommand{\Fpaths}{FPaths^\MDP}
\newcommand{\filtration}{\mathcal{F}}
\newcommand{\MDPpath}{\sigma}

\newcommand{\productMDP}{\mathcal{M}^\times}
\newcommand{\productStates}{S^\times}
\newcommand{\productState}{s^{\times}}

\newcommand{\productTransitions}{T^\times}
\newcommand{\productAccept}{\mathcal{F}^\times}
\newcommand{\productActions}{A^\times}
\newcommand{\productAction}{a^\times}
\newcommand{\productReward}{\reward^\times}
\newcommand{\productDiscount}{\gamma^\times}

\newcommand{\automaton}{\mathcal{A}}
\newcommand{\autoStates}{\mathcal{Q}}
\newcommand{\autoState}{\mathpzc{q}}
\newcommand{\autoTransitions}{\Delta}
\newcommand{\autoAccept}{\mathcal{F}}
\newcommand{\LTL}{\varphi}
\newcommand{\Autorun}{\theta}
\newcommand{\word}{\mathcal{W}}
\newcommand{\potential}{\Phi}

\newcommand{\defeq}{\vcentcolon=}
\newcommand{\eqdef}{=\vcentcolon}

\def\mcirc{\mathbin{\scalerel*{\circ}{j}}}

\section{Introduction}

Linear Temporal Logic (LTL) is a temporal logic language that can encode formulae regarding properties of an infinite sequence of logic propositions. LTL is widely used for the formal specification of high-level objectives for robotics and multi-agent systems, and it is desirable for a system or an agent in the system to learn policies with respect to these high-level specifications. Such systems are modelled as Markov Decision Processes (MDPs), where classic policy synthesis techniques can be adopted if the states and transitions of the MDP are known. However, when the transitions are not known \textit{a priori}, the optimal policy needs to be learned through interactions with the MDP.

Model-free Reinforcement Learning~\cite{Sutton2018}, a powerful method to train an agent to choose actions in order to maximize rewards over time in an unknown environment, is a perfect candidate for LTL specification policy learning. However, it is not straightforward to utilize reward-based RL to learn the optimal policy that maximizes the probability of satisfying LTL specifications~\cite{Alur2021ALearning} due to the difficulty of deciding when, where, and how much reward to give to the agent. To this end, most works adopt a method to first transform the LTL specification into an automaton, then build a product MDP using the original environment MDP and the automaton, on which model-free RL algorithms are applied. However, one crucial obstacle still remains, and that is how to properly define the reward that leads an agent to the optimal satisfaction of LTL specification. Several algorithms have been proposed~\cite{Hahn2019Omega-regularLearning,Bozkurt2019ControlLearning,Hasanbeig2019ReinforcementGuarantees} for learning LTL specifications, where, in order to ensure optimality, the key parameters are chosen depending on assumptions or knowledge of the environment MDP. These works normally only prove the existence of the optimality guarantee parameters or provide unnecessarily harsh bounds for them, which might lead to inefficient learning. In addition, it is unclear how to explicitly choose these parameters even with certain knowledge of the environment MDP. Furthermore, the assumed parameters are evaluated in experiments indirectly, either through inspection of the value function or comparison of the expected reward gained, making it difficult to tune the optimality parameters for LTL learning in general MDPs.

In this work, we propose a novel and more general product MDP, reward structure and discounting mechanism that, by leveraging model-free reinforcement learning algorithms, efficiently learns the optimal policy that maximizes the probability of satisfying the LTL specification with guarantees. We demonstrate improved theoretical results on the optimality of our product MDP and the reward structure, with more stringent analysis that yields better bounds on the optimality parameters. Moreover, this analysis sheds light on how to explicitly choose the optimality parameters based on the environment MDP. We also adopt counterfactual imagining that exploits the known high-level LTL specification to further improve the performance of our algorithm. Last but not least, we propose to use the PRISM model checker~\cite{Kwiatkowska2011PRISMSystems} to directly evaluate the satisfaction probability of the learned policies, providing a platform to directly compare algorithms and tune key parameters. We conduct experiments on several common MDP environments with various challenging LTL tasks, and demonstrate the improved sample efficiency and convergence of our methods.

Our contributions include: (i) a novel product MDP design that incorporates an accepting states counter with a generalized reward structure; (ii) a novel reinforcement learning algorithm that converges to the optimal policy for satisfying LTL specifications, with theoretical optimality guarantees and theoretical analysis results on choosing the key parameters; (iii) the use of counterfactual imagining, a method to exploit the known structure of the LTL specification by creating imagination experiences through counterfactual reasoning; and (iv) direct evaluation of the proposed algorithms through a novel integration of probabilistic model checkers within the evaluation pipeline, with strong empirical results demonstrating better sample efficiency and training convergence.

\subsection*{Related Work}

Most works on LTL learning with reward-based RL utilize a product MDP: a product of the environment MDP and an automaton translated from the LTL specification. Sadigh \textit{et al.}~\shortcite{Sadigh2014ASpecifications} first used deterministic Rabin automata to create this product with a discounted reward design to learn LTL, while later works adopted a new automaton design, limit-deterministic Büchi automata (LDBA)~\cite{Sickert2016Limit-deterministicLogic}. Hahn \textit{et al.}~\shortcite{Hahn2019Omega-regularLearning} adopted a product MDP with LDBA and augmented it with sink states to reduce the LTL satisfaction problem into a limit-average reward problem with optimality guarantees. Hahn \textit{et al.}~\shortcite{Hahn2020FaithfulObjectives} later modified this approach by including two discount factors with similar optimality guarantee results. Bozkurt \textit{et al.}~\shortcite{Bozkurt2019ControlLearning} proposed a discounted reward learning algorithm on the product MDP with optimality guarantees, where the discount factor is chosen based on certain assumptions about the unknown environment MDP. To the best of our knowledge, these approaches are the only available to provide optimality guarantees for the full infinite-horizon LTL learning. However, many methods have nevertheless demonstrated empirical results for learning LTL. Hasanbeig \textit{et al.}~\shortcite{Hasanbeig2020DeepLogics,Hasanbeig2019ReinforcementGuarantees} proposed an accepting frontier function as the reward for the product MDP, while Cai \textit{et al.}~\shortcite{Cai2021ModularLogic} extended this reward frontier to continuous control tasks.

Due to the difficulty of learning full LTL, many approaches focus on learning restricted finite LTL variants. Giacomo \textit{et al.}~\shortcite{DeGiacomo2013LinearTraces,GiuseppeDeGiacomo2019FoundationsScheduling} proposed the LTLf variant and a corresponding reinforcement learning algorithm; Littman \textit{et al.}~\shortcite{Littman2017Environment-IndependentGLTL} formulated a learning algorithm for the GLTL variant; Aksaray \textit{et al.}~\shortcite{Aksaray2016Q-LearningSpecifications} proposed to learn Signal Temporal Logic and Li \textit{et al.}~\shortcite{Li2016ReinforcementRewards} a truncated LTL variant for robotics applications.

Another related line of work leverages automata to learn non-Markovian rewards. Toro Icarte \textit{et al.}~\shortcite{Icarte2022RewardLearning,ToroIcarte2018UsingLearning} defined a reward machine automaton to represent high-level non-Markovian rewards, while Camacho \textit{et al.}~\shortcite{Camacho2019LTLLearning} introduced a method to learn finite LTL specifications by transforming them into reward machines. However, the expressiveness of reward machines is strictly weaker than that of LTL. Lastly, there are works that exploit other high-level logic specifications to facilitate learning~\cite{Andreas2016ModularSketches,Jiang2021Temporal-Logic-BasedTasks,Jothimurugan2020ATasks,Jothimurugan2021CompositionalSpecifications}, but they are less relevant to reinforcement learning from LTL.


\section{Preliminaries}

Before formulating our problem, we provide preliminary background on Markov decision processes, linear temporal logic, and reinforcement learning.

\subsection{Markov Decision Processes}

\begin{definition}[Markov decision process~\cite{Littman2001MarkovProcesses}]
A Markov decision process (MDP) $\MDP$ is a tuple $(\states,s_0,\actions,\transitions,\propositions,\labFunc,\reward,\discount)$, where $\states$ is a finite set of states, $s_0 \in \states$ is the initial state, $\actions$ is a finite set of actions, $\transitions:\states\times\actions\times\states\rightarrow [0,1]$ is the probabilistic transition function, $\propositions$ is the set of atomic propositions, $\labFunc: \states\rightarrow2^{\propositions}$ is the proposition labeling function, $\reward:\states\times\actions\times\states\rightarrow\realNumber$ is a reward function and $\discount:\states\rightarrow (0,1]$ is a discount function. Let $\actions(\state)$ denote the set of available actions at state $\state$, then, for all $\state\in\states$, it holds that $\sum_{\state^\prime\in\states}\transitions(\state,\action,\state^\prime)=1$ if $\action\in\actions(\state)$ and 0 otherwise.
\end{definition}

An infinite path is a sequence of states $\MDPpath=\state_0,\state_1,\state_2...$, where there exist $\action_{i+1}\in\actions(\state_{i})$ such that $\transitions(\state_i,\action_{i+1},\state_{i+1})>0$ for all $i\geq 0$, and a finite path is a finite such sequence. We denote the set of infinite and finite paths of the MDP $\MDP$ as $\paths$ and $\Fpaths$, respectively. We use $\MDPpath[i]$ to denote $\state_i$, and  $\MDPpath[:i]$ and $\MDPpath[i:]$ to denote the prefix and suffix of the path, respectively. Furthermore, we assume self-loops: if $A(\state)=\varnothing$ for some state $\state$, we let $\transitions(\state,\action,\state)=1$ for some $\action\in\actions$ and $A(\state)={\action}$ such that all finite paths can be extended to an infinite one.

{\setstretch{1.1}A finite-memory policy $\policy$ for $\MDP$ is a function $\policy:\Fpaths\rightarrow D(\actions)$ such that $supp(\policy(\MDPpath))\subseteq \actions(\MDPpath[-1])$,
where $D(\actions)$ denotes a distribution over $\actions$, $supp(d)$ denotes the support of the distribution and $\MDPpath[-1]$ is the last state of a finite path $\MDPpath$. A policy $\policy$ is memoryless if it only depends on the current state, \textit{i.e.,} $\MDPpath[-1]=\MDPpath^\prime[-1]$ implies $\policy(\MDPpath)=\policy(\MDPpath^\prime)$, and a policy is deterministic if $\policy(\MDPpath)$ is a point distribution for all $\MDPpath\in\Fpaths$. For a deterministic memoryless policy, we let $\policy(\state)$ represent $\policy(\MDPpath)$ where $\MDPpath[-1]=\state$.

Let $\paths_\policy\subseteq\paths$ denote the subset of infinite paths that follow policy $\policy$ and we define the probability space $(\paths_\policy,\filtration_{\paths_\policy},\probP_\policy)$ over $\paths_\policy$ in the standard way. Then, for any function $f:\paths_\policy\rightarrow\realNumber$, let $\expectE_\policy[f]$ be the expectation of $f$ over the infinite paths of $\MDP$ following $\policy$.

A Markov chain (MC) induced by $\MDP$ and deterministic memoryless policy $\policy$ is a tuple $\MDP_\policy=(\states,s_0,\transitions_\policy,\propositions,\labFunc)$, where $\transitions_\policy(\state,\state^\prime)=\transitions(\state,\policy(\state),\state^\prime)$. A sink (bottom) strongly connected component (BSCC) of a MC is a set of states $C \subseteq \states$ such that, for all pairs $\state_1,\state_2 \in C$, there exists a path from $\state_1$ to $\state_2$ following the transition function $\transitions_\policy$ (strongly connected), and there exists no state $\state^\prime\in S\setminus C$ such that $\transitions_\policy(\state,\state^\prime)>0$ for all $\state\in C$ (sink).}

\subsection{Linear Temporal Logic}
Linear Temporal Logic (LTL) provides a high-level description for specifications of a system. LTL is very expressive and can describe specifications with infinite horizon.
\begin{definition}[\cite{Baier2008}]
An LTL formula over atomic propositions $\propositions$ is defined by the grammar:
\begin{equation*}
    \LTL \vcentcolon\vcentcolon= true\mid p\mid\LTL_1\land\LTL_2\mid\neg\LTL\mid \textsf{\upshape X }\LTL\mid\LTL_1 \textsf{\upshape U } \LTL_2, \quad p\in\propositions,
\end{equation*}
where $\textsf{\upshape X}$ represents next and $\textsf{\upshape U}$ represents until. Other Boolean and temporal operators are derived as follows: or: $\LTL_1\lor\LTL_2=\neg(\neg\LTL_1\land\neg\LTL_2)$; implies: $\LTL_1\rightarrow\LTL_2=\neg\LTL_1\lor\LTL_2$; eventually: $\textsf{\upshape F }\LTL=true \textsf{\upshape U }\LTL$; and always: $\textsf{\upshape G }\LTL=\neg(\textsf{\upshape F } \neg\LTL)$.
\end{definition}

The satisfaction of an LTL formula $\LTL$ by an infinite path $\MDPpath\in\paths$ is denoted by $\MDPpath\models\LTL$, and is defined by induction on the structure of $\LTL$:
\begin{align*}
    \MDPpath\text{ satisfies $\LTL$ if }\quad p\in\labFunc(\MDPpath[0])\quad &\text{ for } p\in\propositions;\\ 
    \MDPpath[1:]\models\LTL\quad &\text{ for } \textsf{\upshape X }\LTL;\\
    \exists i, \MDPpath[i]\models\LTL_2 \text{ and } \forall j<i, \MDPpath[j]\models\LTL_1\quad &\text{ for } \LTL_1\textsf{\upshape U }\LTL_2 ,
\end{align*}
with the satisfaction of Boolean operators defined by their default meaning.

\subsection{Reinforcement Learning}

Reinforcement learning~\cite{Sutton2018} teaches an agent in an unknown environment to select an action from its action space, in order to maximize rewards over time. 
In most cases the environment is modelled as an MDP $\MDP=(\states,s_0,\actions,\transitions,\propositions,\labFunc,\reward,\gamma)$. Given a deterministic memoryless policy $\policy$, at each time step $t$, let the agent's current state be $\state_t$, then the action $\action=\policy(\state_t)$ is chosen and the next state $\state_{t+1}\thicksim\transitions(\state_t,\action,\cdot)$ together with the immediate reward $\reward(\state_t,\action,\state_{t+1})$ is received from the environment. Then, starting at $\state\in\states$ and time step $t$, the expected discounted reward following $\policy$ is
\begin{equation}
\ExpReward^\policy_t(\state)=\expectE_\policy[\sum^\infty_{i=t}(\prod_{j=t}^{i-1}\discount(\state_{j}))\cdot\reward(\state_{i},\action_{i},\state_{i+1})\mid \state_t=\state],
\end{equation}
where $\prod_{j=t}^{t-1}\defeq1$. The agent's goal is to learn the optimal policy that maximizes the expected discounted reward. Note that we defined a discount function instead of a constant discount factor because it is essential for our proposed LTL learning algorithm to discount the reward depending on the current MDP state.

Q-learning~\cite{Watkins1992Q-learning} is a widely used approach for model-free RL. It utilizes the idea of the Q function $Q^\policy(\state,\action)$, which is the expected discounted reward of taking action $\action$ at state $\state$ and following policy $\policy$ after that. The Q function for all optimal policies satisfies the Bellman optimality equations:
\begin{align}
Q^*(\state,\action)&=\sum_{\state^\prime\in\states}{\transitions(\state,\action,\state^\prime)\big(\reward(\state,\action,\state^\prime)}\\
&{+\discount(\state) \max_{\action^\prime\in\actions}{Q^*(\state^\prime,\action^\prime)}\big)}
\;\forall\action\in\actions, \state\in\states. \nonumber
\end{align}
At each iteration of the Q-learning algorithm, the agent's experiences, i.e, the next state $\state^\prime$ and immediate reward $\reward(\state,\action,\state^\prime)$, are used to update the Q function:
\begin{equation}
Q(\state,\action)\overset{\alpha}{\gets}\reward(\state,\action,\state^\prime)+\discount(\state)\max_{\action^\prime\in\actions}Q(\state^\prime,\action^\prime),
\end{equation}
where $\alpha$ is the learning rate and $x\overset{\alpha}{\gets}y$ represents $x\gets x+\alpha(y-x)$. In addition, the optimal policy can be recovered from the optimal Q function $Q^*(\state,\action)$ by selecting the action $\action$ with the highest state-action pair value in each state $\state$. Q-learning converges to the optimal Q function in the limit given that each state-action pair is visited infinitely often~\cite{Watkins1992Q-learning}, and thus learns the optimal policy.

\section{Our Method}

Our goal is to formulate a model-free reinforcement learning approach to efficiently learn the optimal policy that maximizes the probability of satisfying an LTL specification with guarantees. We now give an overview of our method. We first transform the LTL objective $\LTL$ into a limit-deterministic Büchi automaton. Then, we introduce a novel product MDP and define a generalized reward structure on it. With this reward structure, we propose a Q-learning algorithm that adopts a collapsed Q function to learn the optimal policy with optimality guarantees. Lastly, we enhance our algorithm with counterfactual imagining that exploits the automaton structure to improve performance while maintaining optimality.

\subsection{Problem Formulation}
Given an MDP $\MDP=(\states,s_0,\actions,\transitions,\propositions,\labFunc,\reward,\gamma)$ with unknown states and transitions and an LTL objective $\LTL$, for any policy $\policy$ of the MDP $\MDP$, let $\probP_\policy(\state\models\LTL)$ denote the probability of paths from state s following $\policy$ satisfying the LTL formula $\LTL$:
\begin{equation}
\probP_\policy(\state\models\LTL)=\probP_\policy\{\MDPpath\in\paths_\policy\mid\MDPpath[0]=s,\MDPpath\models\LTL\}.
\end{equation}
Then, we would like to design a model-free RL algorithm that learns a deterministic memoryless optimal policy $\policy_\LTL$ that maximizes the probability of $\MDP$ satisfying $\LTL$:
\begin{equation}
    \probP_{\policy_\LTL}(\state\models\LTL)=\max_\policy\probP_\policy(\state\models\LTL) \quad \forall\state\in\states.
\end{equation}

\subsection{Limit-deterministic Büchi automata}

We first transform the LTL specifications into automata. The common choices of automata include deterministic Rabin automata and non-deterministic Büchi automata. In this work, we adopt a Büchi automata variant called limit-deterministic Büchi automata (LDBA)~\cite{Sickert2016Limit-deterministicLogic}.

\begin{definition}
A non-deterministic Büchi automaton is an automaton $\automaton=(\propositions,\autoStates,\autoState_0,\autoTransitions,\autoAccept)$, where $\propositions$ is the set of atomic propositions, $\autoStates$ is a finite set of states, $\autoState_0\in\autoStates$ is the initial state and $\autoAccept\subseteq\autoStates$ is the set of accepting states. Let $\Sigma=2^\propositions\cup\{\epsilon\}$ be a finite alphabet, then the transition function is given by $\autoTransitions:\autoStates\times\Sigma\rightarrow 2^{\autoStates}$.
\end{definition}

\begin{definition}[LDBA]
A Büchi automaton is limit-deterministic if $\autoStates$ can be partitioned into a deterministic set and a non-deterministic set, that is, $\autoStates = \autoStates_\mathcal{N} \cup \autoStates_\mathcal{D}$, where $\autoStates_\mathcal{N} \cap \autoStates_\mathcal{D}=\varnothing$, such that

\begin{enumerate}
    \item $\autoAccept\subseteq\autoStates_\mathcal{D}$ and $\autoState_0\in\autoStates_\mathcal{N}$;
    \item $\abs{\autoTransitions(\autoState,\alphabet)}\leq1$ for all $\autoState\in\autoStates_\mathcal{N}$ and $\alphabet\neq\epsilon\in\alphabets$;
    \item $\autoTransitions(\autoState,\alphabet)\subseteq\autoStates_\mathcal{D}$ and $\abs{\autoTransitions(\autoState,\alphabet)}\leq1$ for all $\autoState\in\autoStates_\mathcal{D}$ and $\alphabet\in\alphabets$;
\end{enumerate}
\end{definition}

An LDBA is a Büchi automaton where the non-determinism is limited in the initial component $\autoStates_\mathcal{N}$ of the automaton. An LDBA starts in a non-deterministic initial component and then transitions into a deterministic accepting component $\autoStates_\mathcal{D}$ through $\epsilon$-transitions after reaching an accepting state, where all transitions after this point are deterministic. We follow the formulation of Bozkurt \textit{et al.}~\shortcite{Bozkurt2019ControlLearning} to extend the alphabets with an $\epsilon$-transition that handles all the non-determinism, meaning only $\epsilon$ can transition the automaton state to more then 1 states: $\abs{\autoTransitions(\autoState,\epsilon)}>1$. This allows the MDP to synchronise with the automaton, which we will discuss in detail in Section~\ref{sec:product_MDP}.

An infinite word $\word\in\Sigma^\omega$, where $\Sigma^\omega$ is the set of all infinite words over the alphabet $\Sigma$, is accepted by a Büchi automaton $\automaton$ if there exists an infinite automaton run $\Autorun\in\autoStates^\omega$ from $\autoState_0$, where $\Autorun[t+1]\in\autoTransitions(\Autorun[t],\word[t]), \forall t\geq 0$, such that $\text{inf}(\Autorun)\cap\autoAccept\neq\varnothing$, where $\text{inf}(\Autorun)$ is the set of automaton states that are visited infinitely often in the run $\Autorun$.

LDBAs are as expressive as the LTL language, and the satisfaction of any given LTL specification $\LTL$ can be evaluated on the LDBA derived from $\LTL$. We use Rabinizer 4~\cite{Kretinsky2018RabinizerAutomaton} to transform LTL formulae into LDBAs. In Figure~\ref{fig:LDBA} we give an example of the LDBA derived from the LTL formula \enquote{\textsf{\upshape F}\textsf{\upshape G }a \& \textsf{\upshape G }!c}, where state 1 is the accepting state. LDBAs are different from reward machines~\cite{Icarte2022RewardLearning} because they can express properties satisfiable by infinite paths, which is strictly more expressive than reward machines, and they have different accepting conditions.

\begin{figure*}[tb]
\begin{subfigure}[t]{0.33\textwidth}
    \centering
        \includegraphics[width=0.5\textwidth]{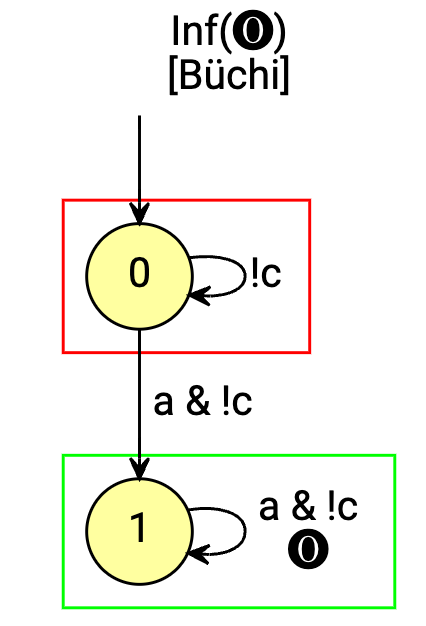}
    \caption{LDBA for \enquote{\textsf{\upshape F}\textsf{\upshape G} a \& \textsf{\upshape G} !c}.}
    \label{fig:LDBA}
\end{subfigure}
\begin{subfigure}[t]{0.62\textwidth}
    \centering
    \includegraphics[width=0.7\textwidth]{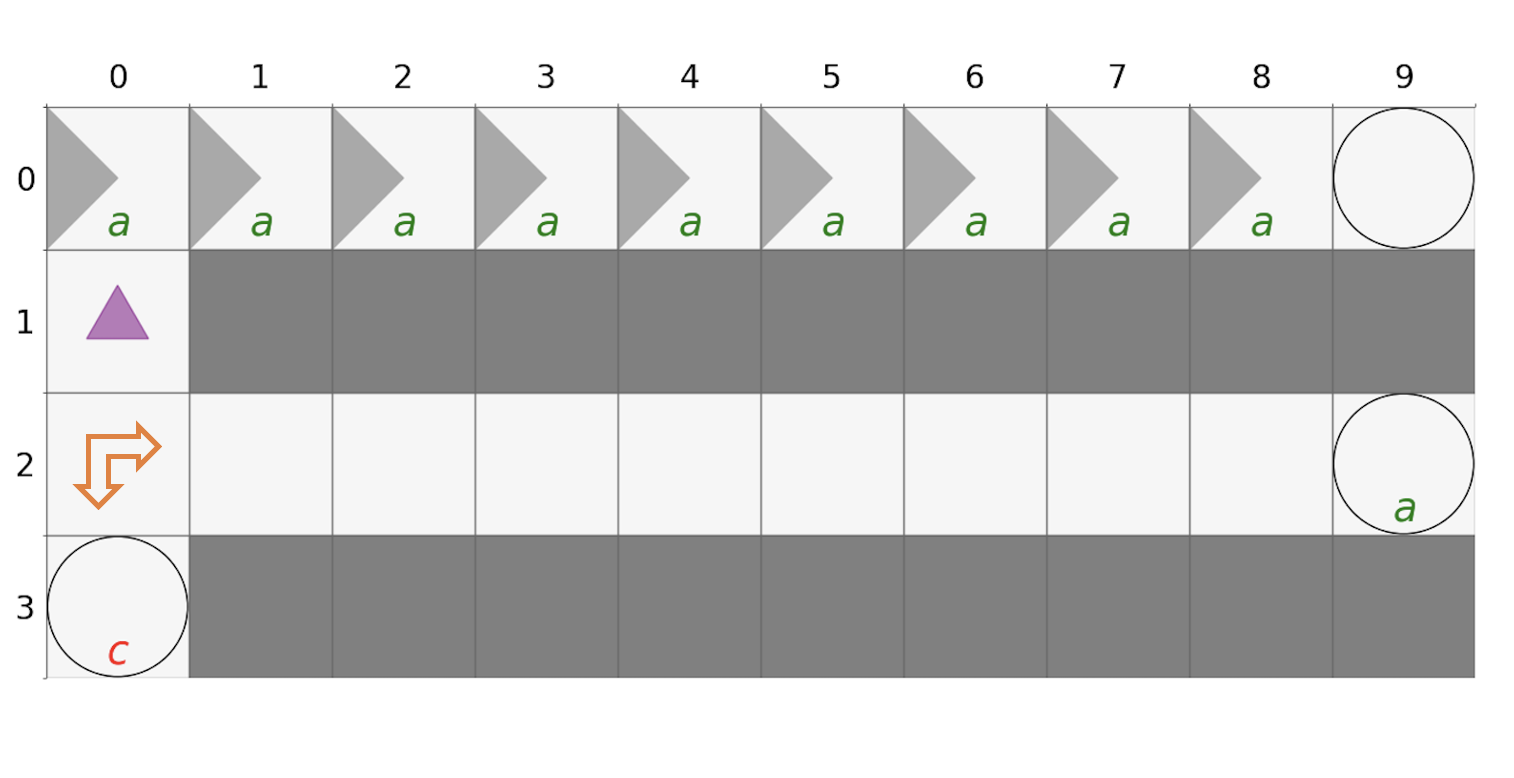}
    \caption{A motivating MDP example for the $K$ counter, where the purple triangle at (1,0) is the starting point and the bidirectional arrow at (2,0) is a probability gate.}
    \label{fig:example_mdp}
\end{subfigure}
\caption{An example LDBA (left) and probabilistic gate MDP (right) motivating the $K$ counter (see Example~\ref{example_mdp}).}
\label{fig:example_task}
\end{figure*}

\subsection{Product MDP}
\label{sec:product_MDP}
In this section, we propose a novel product MDP of the environment MDP, an LDBA and an integer counter, where the transitions for each component are synchronised. Contrary to the standard product MDP used in the literature~\cite{Bozkurt2019ControlLearning,Hahn2019Omega-regularLearning,Hasanbeig2020DeepLogics}, this novel product MDP incorporates a counter that counts the number of accepting states visited by paths starting at the initial state.

\begin{definition}[Product MDP]
\label{def:productMDP}
Given an MDP $\MDP=(\states,\state_0,\actions,\transitions,\propositions,\labFunc,\reward,\discount)$, an LDBA $\automaton=(\propositions,\autoStates,\autoState_0,\autoTransitions,\autoAccept)$ and $K\in\naturalNumber$, we construct the product MDP as follows: 
$$
\productMDP=\MDP\times\automaton\times[0..K]=(\productStates,\state_0^\times,\productActions,\productTransitions,\productAccept,\productReward,\productDiscount), 
$$
where the product states $\productStates=\states\times\autoStates\times[0..K]$, the initial state $\state_0^\times=(\state_0,\autoState_0,0)$, the product actions $\productActions=\actions\cup\{\epsilon_\autoState\mid\autoState\in\autoStates\}$, the accepting set $\productAccept=\states\times\autoAccept\times[0..K]$, and the product transitions $\productTransitions:\productStates\times\productActions\times\productStates \rightarrow [0,1]$, which are defined as:
\begin{align}
&\productTransitions((\state,\autoState,n),\action,(\hat{\state},\hat{\autoState},n))\\
&=\begin{cases}\transitions(\state,\action,\hat{\state}) &\text{ if } \action\in\actions\text{ and } \hat{\autoState}\in\autoTransitions(\autoState,\labFunc(\state))\setminus\autoAccept;\\
    1 &\text{ if }\action=\epsilon_{\hat{\autoState}}, \hat{\autoState}\in\autoTransitions(\autoState,\epsilon)\text{ and }\hat{\state}=\state;\\
    0 &\text{ otherwise }.
    \end{cases}\nonumber\\
    &\productTransitions((\state,\autoState,n),\action,(\hat{\state},\hat{\autoState},\min(n+1,K)))\\
    &=
    \begin{cases}\transitions(\state,\action,\hat{\state})&\text{ if }\action\in\actions\text{ and } \hat{\autoState}\in\autoTransitions(\autoState,\labFunc(\state))\cap\autoAccept;\\
    0 &\text{ otherwise }.
    \end{cases}\nonumber
\end{align}
where all other transitions are equal to 0. The product reward $\productReward:\productStates\times\productActions\times\productStates\rightarrow\realNumber$ and the product discount function $\productDiscount:\productStates\rightarrow(0,1]$ that are suitable for LTL learning are defined later in Definition~\ref{reward_defn}.

Furthermore, an infinite path $\MDPpath$ of $\productMDP$ satisfies the Büchi condition $\LTL_\autoAccept$ if $\textnormal{ inf}(\MDPpath)\cap\productAccept\neq\varnothing$. With a slight abuse of notation we denote this condition in LTL language as $\MDPpath\models\textsf{\upshape G}\textsf{\upshape F }\LTL_\autoAccept$, meaning for all $M\in\naturalNumber$, there always exists $\productState\in\productAccept$ that will be visited in $\MDPpath[M:]$.
\end{definition}

When an MDP action $\action\in\actions$ is taken in the product MDP $\productMDP$, the alphabet used to transition the LDBA is deduced by applying the label function to the current environment MDP state: $\labFunc(\state)\in2^{\propositions}$. In this case, the LDBA transition $\delta(\autoState, \labFunc(\state))$ is deterministic. Otherwise, if an $\epsilon$-action $\epsilon_{\hat{\autoState}}\in\{\epsilon_\autoState\mid\autoState\in\autoStates\}$ is taken, LDBA is transitioned with an $\epsilon$-transition, and the non-determinism of $\delta(q, \epsilon)$ is resolved by transitioning the automaton state to $\hat{q}$. The $K$ counter value is equal to 0 in the initial state, and each time an accepting state is reached, the counter value increases by one until it is capped at $K$.

\begin{example}
\label{example_mdp}
We motivate our product MDP structure of Definition~\ref{def:productMDP} through an example. In Figure~\ref{fig:example_mdp}, we have a grid environment where the agent can decide to go up, down, left or right. The task is to visit states labeled \enquote{a} infinitely often without visiting \enquote{c} as described by the LDBA in Figure~\ref{fig:LDBA}. The MDP starts at (1,0), with walls denoted by solid gray squares. The states in the first row only allow action right as denoted by the right pointing triangle, which leads to a sink at (0,9). There is also a probabilistic gate at (2,0) that transitions the agent randomly to go down or right, and if the agent reaches (2,1), the accepting sink at (2,9) is reachable. Therefore, the optimal policy is to go down from the start and stay in (2,9) if the probabilistic gate at (2,0) transitions the agent to the right. The probability of satisfying this task is the probability of the gate sending you to the right. Intuitively, this environment has some initial accepting states that are easy to explore, but lead to non-accepting sinks, whereas the true optimal path requires more exploration. If we set $K=10$ in the product MDP for this task, we can assign very small rewards for the initially visited accepting states and gradually increase the reward as more accepting states are visited to encourage exploration and guide the agent to the optimal policy.
\end{example}

Next, we provide a theorem, which states that the product MDP with Büchi condition $\LTL_\autoAccept$ is equivalent, in terms of the optimal policy, to the original MDP with LTL specification $\LTL$. The proof of this theorem is provided in Appendix A.1.

\begin{theorem}[Satisfiability Equivalence]
For any product MDP $\productMDP$ that is induced from LTL formula $\LTL$, we have that
\begin{equation}
\sup_\policy{\probP_\policy(\state_0\models\LTL)}=\sup_{\policy^\times}{\probP_{\policy^\times}(\productState_0\models\textsf{\upshape G}\textsf{\upshape F }\LTL_\autoAccept)}.
\end{equation}
Furthermore, a deterministic memoryless policy that maximizes the probability of satisfying the Büchi condition $\LTL_\autoAccept$ on the product MDP $\productMDP$, starting from the initial state, induces a deterministic finite-memory optimal policy that maximizes the probability of satisfying $\LTL$ on the original MDP $\MDP$ from the initial state.
\label{thm:equivalence}
\end{theorem}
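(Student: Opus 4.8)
The plan is to split the argument into a transparent \emph{counter reduction} followed by the classical LDBA product equivalence. First I would introduce the projection $\rho:\productStates\to\states\times\autoStates$ that forgets the counter coordinate, i.e.\ $\rho(\state,\autoState,n)=(\state,\autoState)$, and extend it pointwise to paths. The key observation is that along any path the counter value is a \emph{deterministic} function of the prefix: after $t$ steps it equals the minimum of $K$ and the number of steps at which an accepting automaton state was entered. Consequently $\rho$ restricts to a bijection between the infinite paths of $\productMDP$ and those of the counter-free product $\MDP\times\automaton$, and because every counter update fires with probability $1$, this bijection preserves transition probabilities and hence the induced path measure of every policy.

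Since the accepting set $\productAccept=\states\times\autoAccept\times[0..K]$ ignores the counter, a path $\MDPpath$ meets $\productAccept$ infinitely often if and only if $\rho(\MDPpath)$ meets $\states\times\autoAccept$ infinitely often; that is, the B\"uchi condition $\textsf{\upshape G}\textsf{\upshape F}\LTL_\autoAccept$ is invariant under $\rho$. Combining this with the measure preservation above, the set of achievable B\"uchi-satisfaction probabilities on $\productMDP$ coincides with that on $\MDP\times\automaton$, so the two suprema agree:
\begin{equation*}
\sup_{\policy^\times}\probP_{\policy^\times}(\productState_0\models\textsf{\upshape G}\textsf{\upshape F}\LTL_\autoAccept)=\sup_{\policy'}\probP_{\policy'}(\textnormal{B\"uchi on }\MDP\times\automaton).
\end{equation*}
This reduces the theorem to the counter-free statement.

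The remaining equality $\sup_\policy\probP_\policy(\state_0\models\LTL)=\sup_{\policy'}\probP_{\policy'}(\textnormal{B\"uchi on }\MDP\times\automaton)$ is the \emph{good-for-MDPs} property of LDBAs, which I would either cite from the established literature~\cite{Sickert2016Limit-deterministicLogic,Hahn2019Omega-regularLearning,Bozkurt2019ControlLearning} or prove by two inequalities. For $\leq$, take a near-optimal MDP policy $\policy$; almost every $\LTL$-satisfying path admits an accepting LDBA run, so one builds a product policy that replays the MDP actions of $\policy$ while resolving the automaton's initial-component non-determinism through the $\epsilon$-actions so as to follow such a run, thereby satisfying the B\"uchi condition with at least the same probability. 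For $\geq$, project any product policy back to $\MDP$: whenever a product path satisfies the B\"uchi condition its automaton run is accepting, hence the generated word lies in the language of $\automaton$ and the projected MDP path satisfies $\LTL$, while the $\epsilon$-actions leave the MDP state unchanged and contribute nothing to the projected path. Limit-determinism is exactly what makes the $\geq$ direction go through, since it prevents a scheduler from exploiting spurious non-determinism to inflate the satisfaction probability.

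For the second claim, I would read a deterministic memoryless optimal policy $\policy^\times:\productStates\to\productActions$ as a deterministic finite-memory policy on $\MDP$ whose memory is the finite pair $(\autoState,n)\in\autoStates\times[0..K]$: at MDP state $\state$ with memory $(\autoState,n)$ the induced policy follows the deterministic chain of $\epsilon$-moves prescribed by $\policy^\times$ (finite, since $\autoStates$ is finite and a deterministic memoryless rule cannot productively cycle among $\epsilon$-transitions) and then issues the resulting MDP action, after which $(\autoState,n)$ is updated deterministically from the observed next state and label. Optimality transfers along the chain of equalities established above. I expect the main obstacle to be the $\geq$ direction of the good-for-MDPs equivalence---controlling the automaton's non-determinism so that no product scheduler can over-satisfy $\LTL$---which is precisely the property guaranteed by the LDBA construction; by contrast the counter augmentation is entirely transparent and contributes only bookkeeping.
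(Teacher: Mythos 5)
Your proposal is correct and, at its core, follows the same route as the paper's proof: both directions of the equality rest on the same two ingredients---the result of Sickert \textit{et al.} that every $\LTL$-satisfying path of $\MDP$ admits an accepting LDBA run, which is replayed in the product by resolving the non-determinism through $\epsilon$-actions (giving $\leq$), and projection of product paths back to $\MDP$ by erasing $\epsilon$-steps and the automaton/counter components (giving $\geq$). What you do differently is to factor out the counter first: you introduce the projection $\rho(\state,\autoState,n)=(\state,\autoState)$, argue that the counter value is a deterministic function of the path prefix, and conclude that $\rho$ is a measure-preserving bijection on paths under which the B\"uchi condition is invariant, thereby reducing the theorem to the counter-free product $\MDP\times\automaton$ before invoking the classical good-for-MDPs property of LDBAs. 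The paper instead absorbs the counter into the two inequalities with one-line remarks (``the counter will eventually reach $K$,'' ``removing the projection of the automaton and $K$ counter''). Your decomposition buys a cleaner separation between the paper's novel ingredient (the counter, which is indeed pure bookkeeping for this theorem) and the pre-existing LDBA equivalence, and your treatment of the second claim is more explicit than the paper's, exhibiting the finite memory $(\autoState,n)$ and the deterministic $\epsilon$-chains; the paper's inline treatment is shorter but leaves the measure-preservation and finite-memory constructions implicit. One small correction: by Definition~\ref{def:productMDP} the counter increments only when an accepting automaton state is entered via an action in $\actions$---an $\epsilon$-action leading into $\autoAccept$ leaves $n$ unchanged---so your characterization of the counter as ``the number of steps at which an accepting automaton state was entered'' needs that qualifier; this does not affect your argument, since the counter remains a deterministic function of the prefix, which is all the bijection requires.
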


\subsubsection{Reward Structure for LTL learning}
We first define a generalized reward structure in $\productMDP$ for LTL learning, and then prove the equivalence between acquiring the highest expected discounted reward and achieving the highest probability of satisfying $\LTL_\autoAccept$ under this reward structure.

\begin{definition}[Reward Structure]
\label{reward_defn}
Given a product MDP $\productMDP$ and a policy $\policy$, the product reward function $\productReward:\productStates\times\productActions\times\productStates\rightarrow\realNumber$ is suitable for LTL learning if 
\begin{align}
\productReward((\state,\autoState,n),\productAction,(\state^\prime,\autoState^\prime,n^\prime))=
    \begin{cases} 
    R_n &\text{ if }\autoState^\prime\in\autoAccept;\\
    0  &\text{ otherwise,}
    \end{cases}
\end{align}
where $R_n\in(0,U]$ are constants for $n\in[0..K]$ and $U\in(0,1]$ is an upper bound on the rewards. The rewards are non-zero only for accepting automaton states, and depend on the value of the $K$ counter.

Then, given a discount factor $\discount\in(0,1]$, we define the product discount function $\productDiscount: \productStates\rightarrow(0,1]$ as $$\productDiscount(\productState_j)=\begin{cases}
1-\productReward_{j} &\text{ if } \productReward_{j}>0;\\
\discount &\text{ otherwise,}
\end{cases}$$
and the expected discounted reward following policy $\policy$ starting at $\productState$ and time step $t$ is
\begin{align}
    \resizebox{.95\hsize}{!}{$\ExpReward^\policy_t(\productState)=\expectE_\policy[\sum\limits^\infty_{i=t}(\prod\limits_{j=t}^{i-1}\productDiscount(\productState_j))\cdot\productReward(\productState_{i},\productAction_{i},\productState_{i+1})\mid \productState_t=\productState]$}.
\end{align}
\end{definition}

The highest $K$ value reached in a path (i.e., the number of accepting states visited in the path) acts as a measure of how promising that path is for satisfying $\LTL_\autoAccept$. By exploiting it, we can assign varying rewards to accepting states to guide the agent, as discussed in the motivating example in Section~\ref{sec:product_MDP}. Next, we provide a lemma stating the properties of the product MDP regarding the satisfaction of the Büchi condition $\LTL_\autoAccept$.

\begin{lemma}
\label{useful_lemma}
Given a product MDP $\productMDP$ with its corresponding LTL formula $\LTL$ and a policy $\policy$, we write $\productMDP_\policy$ for the induced Markov chain from $\policy$. Let $B_\productAccept$ denote the set of states that belong to accepting BSCCs of $\productMDP_\policy$, and $B^\times_\varnothing$ denote the set of states that belong to rejecting BSCCs:
\begin{align}
&\resizebox{.87\hsize}{!}{$B_\productAccept \defeq \{\productState \mid \productState\in B \in BSCC(\productMDP_\policy),B\cap\productAccept\neq\varnothing\}$};\\
&\resizebox{.87\hsize}{!}{$B^\times_\varnothing \defeq \{\productState \mid \productState\in B \in BSCC(\productMDP_\policy),B\cap\productAccept=\varnothing\}$},
\end{align}
where $BSCC(\productMDP_\policy)$ is the set of all BSCCs of $\productMDP_\policy$. We further define more general accepting and rejecting sets:
\begin{align}
    B_\autoAccept &\defeq \{(\state,\autoState,n) \mid \exists n^\prime\in [0..K] : (\state,\autoState,n^\prime)\in B_\productAccept\}\label{B_accept};\\
    B_\varnothing &\defeq \{(\state,\autoState,n) \mid \exists n^\prime\in [0..K] : (\state,\autoState,n^\prime)\in B^\times_\varnothing\label{B_reject}\}.
\end{align}
We then have that $\probP_\policy(\productState\models\textsf{\upshape G}\textsf{\upshape F }\LTL_\autoAccept)=1 \;\forall \productState\in B_\autoAccept$, $\probP_\policy(\productState\models\textsf{\upshape G}\textsf{\upshape F }\LTL_\autoAccept)=0\;\forall\productState\in B_\varnothing$ and $B_\varnothing\cap\productAccept=\varnothing$. Furthermore, $B_\autoAccept$ and $B_\varnothing$ are sink sets, meaning once the set is reached, no states outside the set can be reached.
\end{lemma}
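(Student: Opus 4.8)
The plan is to reduce everything to standard finite Markov chain theory applied to the induced chain $\productMDP_\policy$ and then to exploit the special role of the counter coordinate. Recall that in any finite Markov chain almost every path is eventually absorbed into some BSCC and thereafter visits each of its states infinitely often, and that every BSCC is a closed (sink) set. Hence from any product state the value $\probP_\policy(\cdot\models\textsf{\upshape G}\textsf{\upshape F }\LTL_\autoAccept)$ equals the probability of being absorbed into a BSCC that meets $\productAccept$: if the path enters an accepting BSCC it revisits an accepting state infinitely often, so the Büchi condition holds; if it enters a rejecting BSCC it stays there and visits $\productAccept$ only finitely often, so the condition fails. This immediately yields the base cases $\probP_\policy(\productState\models\textsf{\upshape G}\textsf{\upshape F }\LTL_\autoAccept)=1$ on $B_\productAccept$ and $=0$ on $B^\times_\varnothing$, together with the sink property for those two sets.

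Next I would pin down the counter. Along any path the counter is non-decreasing and bounded by $K$, and it is incremented by one (capped at $K$) precisely when the automaton component enters $\autoAccept$. Two observations follow. First, since a BSCC is strongly connected and the counter cannot decrease, every state of a BSCC carries the same counter value $n_B$. Second, if that BSCC is accepting then some accepting state is re-entered within it; by the transition rule such a re-entry sends the counter to $\min(n_B+1,K)$, which must again equal $n_B$ because the target lies in the same BSCC, forcing $n_B=K$. Thus every accepting BSCC lives entirely at counter $K$, whereas rejecting BSCCs contain no accepting automaton state at all and may sit at any counter value.

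The heart of the argument is the collapse over the counter. Because the counter is a passive coordinate — it influences neither the $\state$ nor the $\autoState$ component of a successor (only which of the two successor counter values is taken), and under a counter-independent (collapsed-$Q$) policy it does not affect the chosen action either — the marginal process on $\states\times\autoStates$ is itself a Markov chain whose BSCCs are exactly the projections of those of $\productMDP_\policy$. Combined with the $n_B=K$ fact, $(\state,\autoState,n)\in B_\autoAccept$ holds iff $(\state,\autoState,K)\in B_\productAccept$, i.e.\ iff $(\state,\autoState)$ projects into an accepting BSCC of the marginal chain, and likewise $(\state,\autoState,n)\in B_\varnothing$ iff $(\state,\autoState)$ projects into a rejecting BSCC. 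From a state of $B_\autoAccept$ the marginal path stays in that accepting BSCC and revisits an accepting automaton state infinitely often, so the full path meets $\productAccept$ infinitely often and $\probP_\policy=1$; from a state of $B_\varnothing$ the marginal path stays in a BSCC devoid of accepting automaton states, so $\productAccept$ is visited finitely often and $\probP_\policy=0$. Closedness of the marginal BSCCs makes both $B_\autoAccept$ and $B_\varnothing$ sink sets, and since a rejecting BSCC contains no $\autoState\in\autoAccept$ we get $B_\varnothing\cap\productAccept=\varnothing$.

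I expect the main obstacle to be justifying this counter-collapse rigorously, namely that the Büchi probability is identical for all counter values sharing a given $(\state,\autoState)$ pair. This is exactly where the passivity of the counter (and the counter-independence of the collapsed policy) is indispensable: without it the action, and hence the set of reachable BSCCs, could differ across counter values, and the existential-over-$n$ definitions of $B_\autoAccept$ and $B_\varnothing$ would fail to be sink sets. The enabling technical step is the structural lemma that accepting BSCCs occur only at counter $K$; once that is in hand, the projection argument together with the standard absorption and infinite-visitation facts delivers all four claims.
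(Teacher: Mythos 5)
Your proof is correct and shares the paper's skeleton: standard BSCC absorption and infinite-visitation facts, plus the passivity of the counter, which is exactly the paper's Equation~(\ref{eq_lemma}). The packaging, however, differs in a useful way. The paper transfers reachability state-by-state across counter values and only remarks in passing that accepting BSCCs sit at counter $K$, whereas you make that structural fact (together with ``every BSCC carries a single counter value'') the engine of the argument, collapse to a marginal chain on $\states\times\autoStates$, and obtain the crisp characterization $(\state,\autoState,n)\in B_\autoAccept \iff (\state,\autoState,K)\in B_\productAccept$; the paper's route is more direct, while yours isolates the structure more cleanly. The genuinely important difference is your closing caveat, which the paper omits: Equation~(\ref{eq_lemma}) holds for the induced chain only when the policy's action choice is itself counter-independent, yet the lemma is stated for an arbitrary policy on $\productMDP$. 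Your warning is substantive, not cosmetic — for a counter-dependent deterministic memoryless policy the lemma as stated can actually fail. For instance, with the task $\textsf{\upshape G}\textsf{\upshape F}\,a$, take a policy that loops on an $a$-labelled state when the counter equals $K$ (creating an accepting BSCC at counter $K$, hence placing $(\state,\autoState,n)$ in $B_\autoAccept$ for every $n$) but exits to a rejecting sink whenever the counter is below $K$; then paths from $(\state,\autoState,0)$ are absorbed into rejecting BSCCs, so $B_\autoAccept$ is not a sink set and $\probP_\policy(\productState\models\textsf{\upshape G}\textsf{\upshape F}\LTL_\autoAccept)=1$ fails on it. The lemma and the paper's proof of it must therefore be read as restricted to counter-independent policies — the collapsed policies that Algorithm~\ref{alg:Q_learning} and Theorem~\ref{thm:q_learning} actually produce — and your proof has the merit of making that hypothesis explicit where the paper leaves it implicit.
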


The proof of this lemma is provided in Appendix A.2. Using this lemma, we can now state and proof the main theorem of this paper.

\begin{theorem}[Optimality guarantee]
Given an LTL formula $\LTL$ and a product MDP $\productMDP$, there exists an upper bound $U\in(0,1]$ for rewards and a discount factor $\discount\in(0,1]$ such that for all product rewards $\productReward$ and product discount functions $\productDiscount$ satisfying Definition~\ref{reward_defn}, the optimal deterministic memoryless policy $\policy_\reward$ that maximizes the expected discounted reward $G^{\policy_\reward}_0(\productState_0)$ is also an optimal policy $\policy_\LTL$ that maximizes the probability of satisfying the Büchi condition $\probP_{\policy_\LTL}(\productState_0\models\textsf{\upshape G}\textsf{\upshape F }\LTL_\autoAccept)$ on the product MDP $\productMDP$.
\label{thm:optimality}
\end{theorem}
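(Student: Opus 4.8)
The plan is to reduce the infinite-horizon Büchi objective to a reachability quantity via Lemma~\ref{useful_lemma}, to exploit the telescoping created by the coupling $\productDiscount = 1-\productReward$ at accepting states in order to evaluate the discounted return exactly, and then to choose $U$ and $\discount$ so that the return becomes an order-$U$ approximation of the Büchi-satisfaction probability, tight enough to separate the optimal policy from every strictly suboptimal one.

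First I would fix a deterministic memoryless policy $\policy$ and write $N$ for the (random) number of times a path of $\productMDP_\policy$ enters an accepting state. By Lemma~\ref{useful_lemma} every path almost surely reaches some BSCC, and the event $\{N=\infty\}$ coincides almost surely with reaching an accepting BSCC $B_\productAccept$, so $p_\policy \defeq \probP_\policy(\productState_0\models\textsf{\upshape G}\textsf{\upshape F }\LTL_\autoAccept) = \probP_\policy(N=\infty)$. The key observation is a telescoping identity. Indexing the accepting visits by $l=0,1,2,\dots$ with rewards $R^{(l)}\in(0,U]$, each followed by discount $1-R^{(l)}$, set $a_k = \prod_{l<k}(1-R^{(l)})$ so that $a_0=1$ and $a_kR^{(k)} = a_k-a_{k+1}$. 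With the non-accepting discounts temporarily set to $1$, the accepting rewards along a path sum to $\sum_k a_kR^{(k)} = 1-\lim_k a_k$, which equals $1$ exactly when $N=\infty$ and equals $1-\prod_{l<N}(1-R^{(l)})<1$ when $N<\infty$. Because the genuine discounts at non-accepting steps equal $\discount\le 1$, the true per-path return is bounded above by this quantity and increases monotonically to it as $\discount\uparrow 1$.

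Next I would take $\discount\uparrow 1$. By monotone (or bounded) convergence, $\ExpReward^\policy_0(\productState_0)\to 1-h_\policy$ where $h_\policy = \expectE_\policy[\prod_{l<N}(1-R^{(l)})]$ with the convention that the product is $0$ on $\{N=\infty\}$. Writing $h'_\policy = \expectE_\policy[(1-\prod_{l<N}(1-R^{(l)}))\,\mathds{1}_{\{N<\infty\}}]$ and using $\probP_\policy(N<\infty)=1-p_\policy$ gives $h_\policy = (1-p_\policy)-h'_\policy$, hence $\lim_{\discount\uparrow 1}\ExpReward^\policy_0(\productState_0) = p_\policy + h'_\policy$. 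The elementary bound $1-\prod_{l<N}(1-R^{(l)})\le 1-(1-U)^N \le UN$ then yields $0\le h'_\policy\le U\,\bar N_\policy$, where $\bar N_\policy = \expectE_\policy[N\,\mathds{1}_{\{N<\infty\}}]$ is the expected number of accepting visits on rejecting paths; since $\productMDP_\policy$ is a finite absorbing chain this is finite. Thus the limiting return equals the Büchi probability up to an additive error of order $U$.

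Finally I would invoke finiteness of the set of deterministic memoryless policies. Let $p^\ast = \max_\policy p_\policy$ and let $\Delta = \min\{p^\ast-p_\policy : p_\policy<p^\ast\}>0$ be the (positive) optimality gap, and let $C = \max_\policy \bar N_\policy <\infty$. Choosing $U$ with $UC<\Delta$ forces, for every suboptimal $\policy$ and every optimal $\policy^\ast$, $\lim_{\discount\uparrow 1}\ExpReward^{\policy^\ast}_0(\productState_0)\ge p^\ast > (p^\ast-\Delta)+UC \ge \lim_{\discount\uparrow 1}\ExpReward^{\policy}_0(\productState_0)$; as there are finitely many policies, these strict limiting inequalities persist for all $\discount$ sufficiently close to $1$, so the reward-optimal $\policy_\reward$ must attain $p^\ast$ and is therefore Büchi-optimal. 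I expect the main obstacle to be the third step: making the reduction of the discounted return to $p_\policy + O(U)$ fully rigorous one must justify the interchange of limit and expectation, control $\bar N_\policy$ uniformly over the finitely many policies, and verify that $U$ may be fixed first (placing the $O(U)$ error below the gap $\Delta$) with $\discount$ chosen afterwards, rather than requiring a single joint limit in $(U,\discount)$.
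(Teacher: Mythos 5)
Your proposal follows the same skeleton as the paper's own proof: condition the return on whether the path satisfies the Büchi condition, use the telescoping identity $\sum_{k}\prod_{l<k}(1-R^{(l)})R^{(k)}=1-\lim_k\prod_{l<k}(1-R^{(l)})$ so that satisfying paths contribute (nearly) $1$ while rejecting paths contribute at most $1-(1-U)^{N}\le UN$, and close with the finite-policy gap argument (your $\Delta$ is the paper's $\delta$). Your treatment of the rejecting side is essentially identical to the paper's upper bound $G^\policy_0\le 1-(1-U)^{C^\policy_\varnothing}+\probP_\policy(\cdot)(1-U)^{C^\policy_\varnothing}$, and your bookkeeping (the identity $\lim_{\discount\uparrow 1}G^\policy_0=p_\policy+h'_\policy$, finiteness of $\bar N_\policy$ via transient-state visit counts, fixing $U$ with $UC<\Delta$ before sending $\discount\uparrow 1$) is correct for any \emph{fixed} reward structure.

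The genuine gap is in how you handle the discount on the satisfying side, and it is exactly the obstacle you flagged at the end. You replace the paper's quantitative lower bound $G^\policy_0(\productState)\ge \discount^{\,C^\policy_\autoAccept+N^\policy/U}\,\probP_\policy(\productState\models\textsf{G}\textsf{F}\,\LTL_\autoAccept)$, which holds at a fixed $\discount<1$ and does not depend on the particular values $R_n$, with a qualitative monotone-convergence limit $\discount\uparrow 1$ followed by ``persistence of strict inequalities.'' That establishes the statement with quantifiers $\exists U\,\forall R\,\exists \discount$: the threshold $\discount_0$ you extract comes from MCT convergence, whose rate depends on the reward values $R_n$. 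The theorem, however, asserts $\exists U,\discount\,\forall R$ --- one pair $(U,\discount)$ must work for \emph{all} reward structures satisfying Definition~\ref{reward_defn}. The dependence is not removable by your argument: if accepting visits in the accepting BSCC are separated by $T\ge 2$ steps, a constant reward structure $R_n\equiv r$ gives the Büchi-optimal policy a return of order $r\discount^{T-1}/\bigl(1-\discount^{T-1}(1-r)\bigr)$, which for any fixed $\discount<1$ vanishes as $r\to 0$, while a Büchi-suboptimal policy collecting $M$ quick accepting visits before a rejecting sink earns roughly $Mr$; in the $r\to 0$ limit the comparison becomes $\discount^{T-1}/(1-\discount^{T-1})$ versus $M$, a condition on $\discount$ that your limiting argument never produces. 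Closing the gap requires a discount bound at fixed $\discount$ that is uniform in the $R_n$'s --- this is precisely the $\discount^{N^\policy/U}$ amortization step in the paper's proof (a step whose fine print, one should note, is delicate even there), and it is also what yields the explicit, implementable parameter choices $U=1/C^\policy_\varnothing$ and $\discount=1-1/(C^\policy_\varnothing N^\policy+C^\policy_\autoAccept)$ that constitute part of the theorem's practical content.
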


\begin{proof}[Proof sketch]
We now present a sketch of the proof to provide intuition for the main steps and the selection of key parameters. The full proof is provided in Appendix A.3.

To ensure optimality, given a policy $\policy$ with the product MDP $\productMDP$ and the LTL formula $\LTL$, we want to demonstrate a tight bound between the expected discounted reward following $\policy$ and the probability of $\policy$ satisfying $\LTL$, such that maximizing one quantity is equivalent to maximizing the other.

At a high level, we want to select the two key parameters, the reward upper bound $U\in(0,1]$ and the discount factor $\discount\in(0,1]$, to adequately bound: (i) the rewards given for paths that eventually reach rejecting BSCCs (thus not satisfying the LTL specification); and (ii) the discount of rewards received from rejecting states for paths that eventually reach accepting BSCCs. 

We informally denote by $C^\policy_\varnothing$ the expected number of visits to accepting states before reaching a rejecting BSCC, and (i) can be sufficiently bounded by selecting $U=1/C^\policy_\varnothing$. Next, we informally write $C^\policy_\autoAccept$ for the expected number of rejecting states visited before reaching an accepting BSCC, and denote by $N^\policy$ the expected steps between visits of accepting states in the accepting BSCC. Intuitively, for (ii), we bound the amount of discount before reaching the accepting BSCC using $C^\policy_\autoAccept$, and we bound the discount after reaching the BSCC using $N^\policy$, yielding $\discount=1-1/(C^\policy_\varnothing*N^\policy+C^\policy_\autoAccept)$. 

In practice, using upper bounds of $C^\policy_\varnothing, C^\policy_\autoAccept$ and $N^\policy$ instead also ensures optimality, and those bounds can be deduced from assumptions about, or knowledge of, the MDP.
\end{proof}

As shown in the proof sketch, selecting $U=1/C^\policy_\varnothing$ and $\discount=1-1/(C^\policy_\varnothing*N^\policy+C^\policy_\autoAccept)$ is sufficient to ensure optimality. Using the example of the probabilistic gate MDP in Figure~\ref{fig:example_mdp}, we have that $C^\policy_\varnothing\approx C^\policy_\autoAccept\leq 10$ and $N^\policy=1$, so choosing $U=0.1$ and $\gamma=0.95$ is sufficient to guarantee optimality. For more general MDPs, under the common assumption that the number of states $\abs{\states}$ and the minimum non-zero transition probability $p_{\min}\coloneqq \min_{s,a,s^\prime}\{\transitions(\state,\action,\state^\prime)>0\}$ are known, $C^\policy_\varnothing$ and $C^\policy_\autoAccept$ can be upper bounded by $\abs{\states}/p_{\min}$, while $N^\policy$ can be upper bounded by $\abs{\states}$.

\subsection{LTL learning with Q-learning}

\begin{algorithm}[tb]
\DontPrintSemicolon
\SetNlSty{textbf}{}{:}
\caption{KC Q-learning from LTL}
\label{alg:Q_learning}
\BlankLine
\SetKw{KwIn}{in}
\SetKwInOut{Input}{Input}
\Input{environment MDP $\MDP$, LTL formula $\LTL$}
  {
  \nl translate $\LTL$ into an LBDA $\automaton$\\
  \nl construct product MDP $\productMDP$ using $\MDP$ and $\automaton$\\
  \nl initialize Q for each $\state$ and $\autoState$ pair\\
  \nl \For {$l\gets 0$ to max\textunderscore episode}{
  \nl initialize $(\state,\autoState,n)\gets(\state_0,\autoState_0,0)$\\
  \nl \For{$t\gets 0$ to max\textunderscore timestep}{
  \nl get policy $\policy$ derived from Q (\textit{e.g.,} $\epsilon$-greedy)\\
  \nl take action $\productAction\gets\policy((\state,\autoState,n))$ in $\productMDP$\\
  \nl get next product state $(\state^\prime,\autoState^\prime,n^\prime)$\\
  \nl $r\gets\productReward((\state,\autoState,n),\productAction,(\state^\prime,\autoState^\prime,n^\prime))$\\
 \nl $\discount\gets\discount^\prime(\state,\autoState,n)$ \\
  \nl $Q((\state,\autoState),\productAction)\overset{\alpha}{\gets}r+\discount\max_{\action\in\productActions}Q((\state^\prime,\autoState^\prime),\action)$\\
  \nl   update $(\state,\autoState,n)\gets(\state^\prime,\autoState^\prime,n^\prime)$
        }
     }
  \nl   gets greedy policy $\policy_\LTL$ from Q\\
  \nl   \Return induced policy on $\MDP$ by removing $\epsilon$-actions
  }
\end{algorithm}

Employing this product MDP $\productMDP$ and its reward structure, we present Algorithm~\ref{alg:Q_learning} (KC), a model-free Q-learning algorithm for LTL specifications utilizing the K counter product MDP. The product MDP is constructed on the fly as we explore: for action $\productAction\in\actions$, observe the next environment state $\state^\prime$ by taking action $\productAction$ in environment state $\state$. Then, we compute the next automaton state $\autoState^\prime$ using transition function $\autoTransitions(\autoState,\labFunc(\state))$ and counter $n$ depending on whether $n\leq K$ and $\autoState^\prime\in\autoAccept$. If $\productAction\in\{\epsilon_\autoState\mid\autoState\in\autoStates\}$, update $\autoState^\prime$ using the $\epsilon$-transition and leave environment state $\state$ and counter $n$ unchanged. However, directly adopting Q-learning on this product MDP yields a Q function defined on the whole product state space $\productStates$, meaning the agent needs to learn the Q function for each $K$ value. To improve efficiency, we propose to define the Q function on the environment states $\states$ and automaton states $\autoStates$ only, and for a path $\MDPpath$ of $\productMDP$, the update rule for the Q function at time step $t$ is:
\begin{align}
Q_{t+1}((\state_t,\autoState_t),\productAction_t)&\overset{\alpha}{\gets}\productReward(\productState_t,\productAction_t,\productState_{t+1})\\
&+\discount^\prime(\productState_{t})\max_{\productAction\in\productActions}Q_t((\state_{t+1},\autoState_{t+1}),\productAction)\nonumber,
\end{align}
where $\productState_t=(\state_t,\autoState_t,n_t)$, $\productAction_t$ is the action taken at time step $t$, ${\productState_{t+1}=(\state_{t+1},\autoState_{t+1},n_{t+1})}$ is the next product state, and $\alpha$ is the learning rate. We claim that, with this collapsed Q function, the algorithm returns the optimal policy for satisfying $\LTL$ because the optimal policy is independent from the K counter, with the proof provided in Appendix A.4.
\begin{theorem}
\label{thm:q_learning}
Given an environment MDP $\MDP$ and an LTL specification $\LTL$ with appropriate discount factor $\gamma$ and reward function $\productReward$ satisfying Theorem~\ref{thm:optimality}, Q-learning for LTL described in Algorithm~\ref{alg:Q_learning} converges to an optimal policy $\policy_\LTL$ that maximizes the probability of satisfying $\LTL$ on $\MDP$.
\end{theorem}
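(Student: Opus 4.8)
The plan is to reduce the statement to a convergence question on the collapsed state space, and then to argue that collapsing the $K$ counter out of the Q-function is harmless. By Theorem~\ref{thm:optimality}, for the $U$ and $\discount$ fixed in the hypothesis, any deterministic memoryless policy that maximizes $\ExpReward^{\policy}_0(\productState_0)$ on the full product MDP $\productMDP$ also maximizes $\probP_{\policy}(\productState_0\models\textsf{\upshape G}\textsf{\upshape F }\LTL_{\autoAccept})$; by Theorem~\ref{thm:equivalence} such a policy induces an $\LTL$-optimal finite-memory policy on $\MDP$ once the $\epsilon$-actions are projected out (line~15). Hence it suffices to show that Algorithm~\ref{alg:Q_learning}, whose Q-function is indexed only by $(\state,\autoState)$, converges to a policy whose greedy actions are Büchi-optimal on $\productMDP$.

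First I would isolate the structural role of the counter. The value $n$ increments deterministically whenever an accepting automaton state is entered, saturates at $K$, and never influences the transition probabilities of the $(\state,\autoState)$ components; moreover membership in $\productAccept$ depends only on $\autoState\in\autoAccept$ and not on $n$. Combined with Lemma~\ref{useful_lemma}, this gives that the maximal Büchi-satisfaction probability from $(\state,\autoState,n)$ is the same for every $n$. The next and more delicate step is to show that the optimal \emph{return} is, up to an $O(1-\discount)$ term, also independent of $n$: along any path that settles in an accepting BSCC the telescoping identity $R_{n}+(1-R_{n})\cdot 1=1$ makes the discounted return collapse to $1$ in the $\discount\to 1$ limit, uniformly over the counter values traversed, while paths settling in a rejecting BSCC accrue only finitely many rewards. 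Since Theorem~\ref{thm:optimality} already fixes $U$ and $\discount$ so that the returns of Büchi-optimal and Büchi-suboptimal actions are separated by more than this perturbation, the greedy action at each $(\state,\autoState)$ agrees across all $n$ --- this is the precise content of the claim that the optimal policy is independent of the $K$ counter.

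Finally I would establish convergence of the collapsed update itself, which I expect to be the main obstacle. Because the Q-function aliases all counter values, each entry $Q((\state,\autoState),\productAction)$ is updated with the genuinely $n$-dependent reward $R_{n}$ and discount $\productDiscount$, so its learning target is a history-weighted mixture of the full-product targets rather than a single well-defined Bellman backup; this is exactly why the statement does not follow from Watkins' theorem applied to $\productMDP$, since the optimal $Q^{*}((\state,\autoState,n),\cdot)$ genuinely depends on $n$ and the collapsed iterates cannot equal it. The resolution I would pursue uses the uniform action-separation established above: what must be shown is not that the collapsed values reproduce $Q^{*}$, but only that whatever mixture they converge to preserves the Büchi-optimal ordering of actions, since a convex-combination-like mixture of targets that all share the same separated greedy action cannot flip that action. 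Formalizing this --- arguing that the aliased stochastic rewards still admit a contraction so the iterates converge, and that the limit's greedy policy is Büchi-optimal --- is the technical heart; with it in hand, the standard Robbins--Monro step-size and infinite-visitation conditions yield convergence, and Theorem~\ref{thm:equivalence} then returns an $\LTL$-optimal policy on $\MDP$.
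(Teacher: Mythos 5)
Your overall skeleton coincides with the paper's: reduce via Theorem~\ref{thm:optimality} and Theorem~\ref{thm:equivalence}, establish that the optimal policy can be chosen independently of the counter (using that the $(\state,\autoState)$-marginal transitions do not depend on $n$, Equation~\ref{eq_lemma}), and then handle the collapsed update. For the counter-independence step your argument differs in style: you argue quantitatively (returns are close to the $n$-independent satisfaction probabilities, and the parameters of Theorem~\ref{thm:optimality} separate optimal from suboptimal actions by more than the perturbation), whereas the paper argues by contradiction --- if $\policy^\prime((\state,\autoState,n))\defeq\policy_r((\state,\autoState,0))$ were Büchi-suboptimal, it would make a wrong choice at some counter value, and replaying those same choices from counter value $0$ would make $\policy_r$ itself suboptimal. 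Either route serves this intermediate claim.

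The genuine gap is the step you yourself label \enquote{the technical heart}: convergence of the collapsed Q-learning iterates. You are right that Watkins' theorem does not apply, but your resolution is a program, not a proof. It presupposes that the iterates converge to \enquote{whatever mixture} and then reasons about that limit's greedy ordering; the existence of a limit is precisely the difficulty, because the aliased targets mix counter values with weights given by the visitation frequencies of the evolving $\epsilon$-greedy policy, so there is no fixed Bellman operator to contract against, and Robbins--Monro step-size conditions alone do not produce a limit. The paper closes exactly this step by invoking the convergence theorem for Q-learning on non-Markovian decision processes of Majeed and Hutter~\cite{Majeed2018OnProcesses}, from which it asserts that the collapsed Q function converges to the expected discounted reward of taking $\productAction$ from $(\state,\autoState,0)$; that is, the limit is identified as the $n=0$ slice of the product-MDP values --- a stronger statement than, and in tension with, your description of the limit as a mixture that reproduces no slice of $Q^*$. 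Your intuition that a mixture of targets sharing a separated greedy action cannot flip that action can plausibly be made rigorous (for instance, averaging targets of the form $R_n+(1-R_n)X$ yields $\bar{R}+(1-\bar{R})X$ with $\bar{R}\in(0,U]$, which is again a reward of the kind covered by Definition~\ref{reward_defn}), but as written the proposal neither proves that the iterates converge nor cites a result that does, so the theorem is not established.
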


\subsection{Counterfactual Imagining}
\begin{algorithm}[tb]
\DontPrintSemicolon
\SetNlSty{textbf}{}{:}
\caption{CF+KC Q-learning from LTL}
\label{alg:CounterQ}
\BlankLine
\SetKw{KwIn}{in}
\SetKwInOut{Input}{Input}
\Input{environment MDP $\MDP$, LTL formula $\LTL$}
  {
  \nl translate $\LTL$ into an LBDA $\automaton$\\
  \nl construct product MDP $\productMDP$ using $\MDP$ and $\automaton$\\
  \nl initialize Q for each $\state$ and $\autoState$ pair\\
  \nl \For {$l\gets 0$ to max\textunderscore episode}{
  \nl initialize $(\state,\autoState,n)\gets(\state_0,\autoState_0,0)$\\
  \nl \For{$t\gets 0$ to max\textunderscore timestep}{
  \nl get policy $\policy$ derived from Q (e.g., $\epsilon$-greedy)\\
  \nl get action $\productAction\gets\policy((\state,\autoState,n))$ in $\productMDP$\\
  \nl \For {$\Bar{\autoState}\in\autoStates$}{
  \nl  get counterfactual imagination $(\state^\prime,\Bar{\autoState}^\prime,n^\prime)$ by taking action $\productAction$ at $(\state,\Bar{\autoState},n)$\\
  \nl $r\gets\productReward((\state,\Bar{\autoState},n),\productAction,(\state^\prime,\Bar{\autoState}^\prime,,n^\prime))$\\
  \nl $\discount\gets\discount^\prime(\state,\Bar{\autoState},n)$ \\
  \nl $Q((\state,\Bar{\autoState}),\productAction)\overset{\alpha}{\gets}r+\discount\max_{\action\in\productActions}Q((\state^\prime,\Bar{\autoState}^\prime),\action)$\\
  }
  \nl
  obtain $(\state^\prime,\autoState^\prime,n^\prime)$ using action $\productAction$ at $(\state,\autoState,n)$\\
  \nl   update $(\state,\autoState,n)\gets(\state^\prime,\autoState^\prime,n^\prime)$
        }
     }
  \nl   gets greedy policy $\policy_\LTL$ from Q\\
  \nl   \Return induced policy on $\MDP$ by removing $\epsilon$-actions
  }
\end{algorithm}

\begin{figure*}[t]
\centering
\begin{subfigure}[t]{0.33\textwidth}
    \centering
    \includegraphics[width=\textwidth]{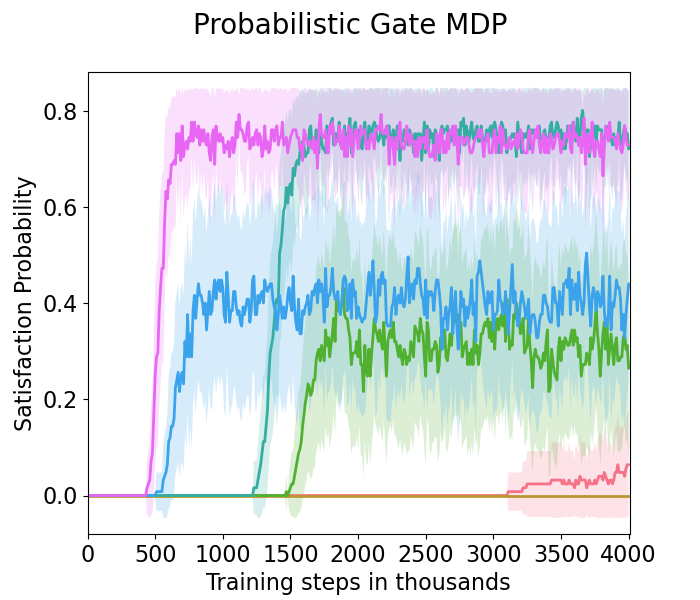}
    \label{fig:hard1_result}
\end{subfigure}
\begin{subfigure}[t]{0.33\textwidth}
    \centering
    \includegraphics[width=\textwidth]{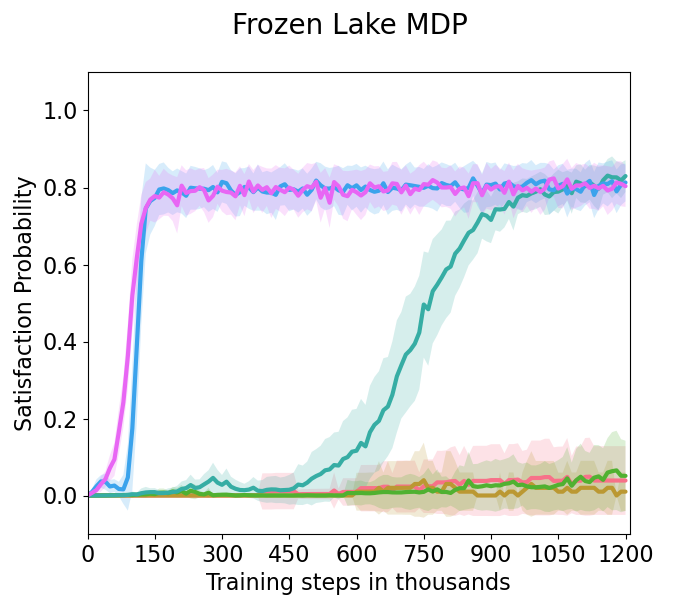}
    \label{fig:frozen_result}
\end{subfigure}
\begin{subfigure}[t]{0.33\textwidth}
    \centering
    \includegraphics[width=\textwidth]{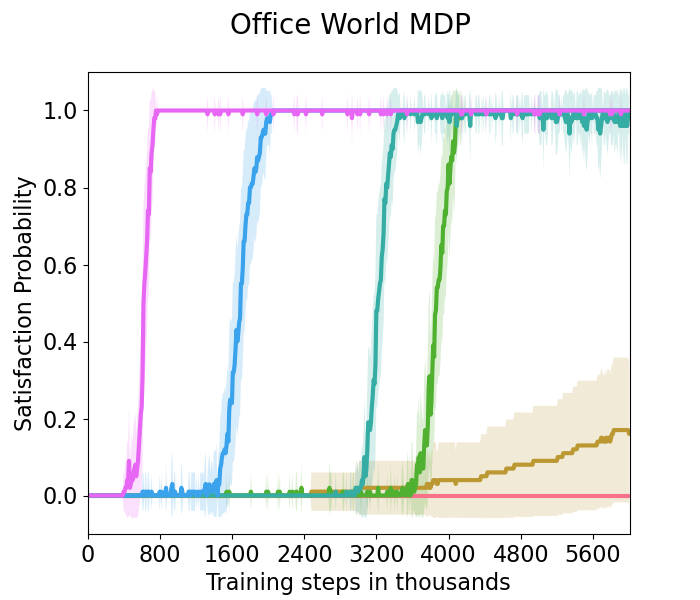}
    \label{fig:office_result}
\end{subfigure}
\begin{subfigure}[t]{\textwidth}
    \centering
\includegraphics[width=0.8\textwidth]{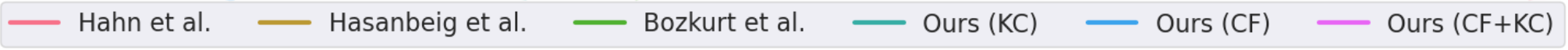}
\end{subfigure}
\caption{The experimental results on various MDPs and tasks.}
\label{fig:all_results}
\end{figure*}
Additionally, we propose a method to exploit the structure of the product MDP, specifically the LDBA, to facilitate learning. We use counterfactual reasoning to generate synthetic imaginations: from one state in the environment MDP, imagine we are at each of the automaton states while taking the same actions.

If the agent is at product state $(\state,\autoState,n)$ and an action $\productAction\in\actions$ is chosen, for each $\Bar{\autoState}\in\autoStates$, the next state $(\state^\prime,\Bar{\autoState}^\prime,n^\prime)$ by taking action $\productAction$ from $(\state,\Bar{\autoState},n)$ can be computed by first taking the action in environment state, and then computing the next automaton state $\Bar{\autoState}^\prime=\autoTransitions(\Bar{\autoState},\labFunc(\state))$ and the next $K$ value $n^\prime$. The reward for the agent is $\productReward(\state,\Bar{\autoState},\Bar{n})$, and we can therefore update the Q function with this enriched set of experiences. These experiences produced by counterfactual imagining are still sampled from $\productMDP$ following the transition function $\productTransitions$, and hence, when used in conjunction with any off-policy learning algorithms like Q-learning, the optimality guarantees of the algorithm are preserved.

As shown in Algorithm~\ref{alg:CounterQ} (CF-KC), counterfactual imagining (CF) can be incorporated into our KC Q-learning algorithm by altering a few lines (line 9-12 in Algorithm~\ref{alg:CounterQ}) of code, and it can also be used in combination with other automata product MDP RL algorithms for LTL. Note that the idea of counterfactual imagining is similar to that proposed by Toro Icarte \textit{et al.}~\shortcite{Icarte2022RewardLearning}, but our approach has adopted LDBAs in the product MDPs for LTL specification learning.

\section{Experimental Results}

We evaluate our algorithms on various MDP environments, including the more realistic and challenging stochastic MDP environments\footnote{The implementation of our algorithms and experiments can be found on GitHub: \url{https://github.com/shaodaqian/rl-from-ltl}}. We propose a method to directly evaluate the probability of satisfying LTL specifications by employing probabilistic model checker PRISM~\cite{Kwiatkowska2011PRISMSystems}. We build the induced MC $\MDP_\policy$ from the environment MDP and the policy in PRISM format, and adopt PRISM to compute the exact satisfaction probability of the given LTL specification. We utilize tabular Q-learning as the core off-policy learning method to implement our three algorithms: Q-learning with $K$ counter reward structure (KC), Q-learning with $K$ counter reward structure and counterfactual imagining (CF+KC), and Q-learning with only counterfactual imagining (CF), in which we set $K=0$. We compare the performance of our methods against the methods proposed by Bozkurt \textit{et al.}~\shortcite{Bozkurt2019ControlLearning}, Hahn \textit{et al.}~\shortcite{Hahn2019Omega-regularLearning} and Hasanbeig \textit{et al.}~\shortcite{Hasanbeig2020DeepLogics}. Note that our KC algorithm, in the special case that $K=0$ with no counterfactual imagining, is algorithmically equivalent to Bozkurt \textit{et al.}~\shortcite{Bozkurt2019ControlLearning} when setting their parameter $\gamma_B=1-U$. Our methods differ from all other existing methods to the best of our knowledge. The details and setup of the experiments are given in Appendix B.

We set the learning rate $\alpha=0.1$ and $\epsilon=0.1$ for exploration. We also set a relatively loose upper bound on rewards $U=0.1$ and discount factor $\gamma=0.99$ for all experiments to ensure optimality. Note that the optimality of our algorithms holds for a family of reward structures defined in Definition~\ref{reward_defn}, and for experiments we opt for a specific reward function that linearly increases the reward for accepting states as the value of $K$ increases, namely $r_n=U\cdot n/K \; \forall n\in[0..K]$, to facilitate training and exploration. The Q function is optimistically initialized by setting the Q value for all available state-action pairs to $2U$. All experiments are run 100 times, where we plot the average satisfaction probability 
with half standard deviation in the shaded area.

First, we conduct experiments on the probabilistic gate MDP described in Example~\ref{example_mdp} with task \enquote{\textsf{\upshape F}\textsf{\upshape G} a \& \textsf{\upshape G} !c}, which means reaching only states labeled \enquote{a} in the future while never reaching \enquote{c} labeled states. We set $K=10$ for this task, and in Figure~\ref{fig:all_results} (left), compared to the other three methods, our method KC achieved better sample efficiency and convergence and CF demonstrates better sample efficiency while still lacking training stability. The best performance is achieved by CF+KC, while other methods either exhibit slower convergence (Bozkurt \textit{et al.}~\shortcite{Bozkurt2019ControlLearning} and Hahn \textit{et al.}~\shortcite{Hahn2019Omega-regularLearning}) or fail to converge (Hasanbeig \textit{et al.}~\shortcite{Hasanbeig2020DeepLogics}) due to the lack of theoretical optimality guarantees.


The second MDP environment is the $8\times8$ frozen lake environment from OpenAI Gym~\cite{Brockman2016OpenAIGym}. This environment consists of frozen lake tiles, where the agent has 1/3 chance of moving in the intended direction and 1/3 of going sideways each, with details provided in Appendix B.2. The task is \enquote{(\textsf{\upshape G}\textsf{\upshape F} a $\mid$ \textsf{\upshape G}\textsf{\upshape F} b) \& \textsf{\upshape G} !h}, meaning to always reach lake camp \enquote{a} or lake camp \enquote{b} while never falling into holes \enquote{h}. We set $K=10$ for this task, and in Figure~\ref{fig:all_results} (middle), we observe significantly better sample efficiency for all our methods, especially for CF+KC and CF, which converge to the optimal policy at around 150k training steps. The other three methods, on the other hand, barely start to converge at 1200k training steps. CF performs especially well in this task because the choice of always reaching \enquote{a} or \enquote{b} can be considered simultaneously during each time step, reducing the sample complexity to explore the environment.

Lastly, we experiment on a slight modification of the more challenging office world environment proposed by Toro Icarte \textit{et al.}~\shortcite{Icarte2022RewardLearning}, with details provided in Appendix B.3. We include patches of icy surfaces in the office world, with the task to either patrol in the corridor between \enquote{a} and \enquote{b}, or write letters at \enquote{l} and then patrol between getting tea \enquote{t} and workplace \enquote{w}, while never hitting obstacles \enquote{o}. $K=5$ is set for this task for a steeper increase in reward, since the long distance between patrolling states makes visiting many accepting states in each episode time consuming. Figure~\ref{fig:all_results} (right) presents again the performance benefit of our methods, with CF+KC performing the best and CF and KC second and third, respectively. For this challenging task in a large environment, the method of Hahn \textit{et al.}~\shortcite{Hahn2019Omega-regularLearning} requires the highest number of training steps to converge.

Overall, the results demonstrate KC improves both sample efficiency and training stability, especially for challenging tasks. In addition, CF greatly improves sample efficiency, which, combined with KC, achieves the best results.

\subsection{Runtime analysis and sensitivity analysis}

\begin{figure}[tb]
    \vspace{-0.5cm}
    \centering
    \includegraphics[width=0.4\textwidth]{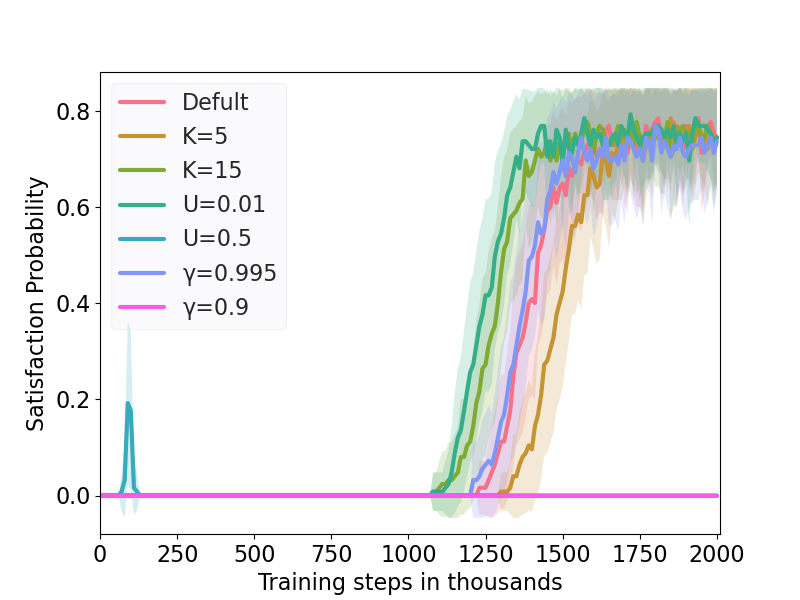}
    \caption{Sensitivity analysis on key parameters.}
    \label{fig:sensitivity_analysis}
\end{figure}

It is worth mentioning that, for counterfactual imagining, multiple updates to the Q function are performed at each step in the environment. This increases the computational complexity, but the additional inner loop updates on the Q function will only marginally affect the overall computation time if the environment steps are computationally expensive. Taking the office world task as an example, the average time to perform 6 million training steps are 170.9s, 206.3s and 236.1s for KC, CF and CF+KC, respectively. However, the time until convergence to the optimal policies are 96.8s, 70.2s and 27.5s for KC, CF and CF+KC, respectively.

For sensitivity analysis on the key parameters, we run experiments on the probabilistic gate MDP task with different parameters against the default values of $U=0.1, \gamma=0.99$ and $K=10$. As shown in Figure~\ref{fig:sensitivity_analysis}, if $U(=0.5)$ is chosen too high or $\gamma(=0.9)$ is chosen too low, the algorithm does not converge to the optimal policy as expected. However, looser parameters $U=0.01$ and $\gamma=0.995$ do not harm the performance, which means that, even with limited knowledge of the underlying MDP, our algorithm still performs well with loose parameters. Optimality is not affected by the $K$ value, while the performance is only mildly affected by different $K$ values.

\section{Conclusion}

We presented a novel model-free reinforcement learning algorithm to learn the optimal policy of satisfying LTL specifications in an unknown stochastic MDP with optimality guarantees. We proposed a novel product MDP, a generalized reward structure and a RL algorithm that ensures convergence to the optimal policy with the appropriate parameters. Furthermore, we incorporated counterfactual imagining, which exploits the LTL specification to create imagination experiences. Lastly, utilizing PRISM~\cite{Kwiatkowska2011PRISMSystems}, we directly evaluated the performance of our methods and demonstrated superior performance on various MDP environments and LTL tasks.

Future works include exploring other specific reward functions under our generalized reward structure framework, tackling the open problem~\cite{Alur2021ALearning} of dropping all assumptions regarding the underlying MDP, and extending the theoretical framework to continuous states and action environment MDPs, which might be addressed through an abstraction of the state space. In addition, utilizing potential-based reward shaping~\cite{Ng1999PolicyShaping}\cite{Devlin2012DynamicShaping} to exploit the semantic class structures of LTL specifications and transferring similar temporal logic knowledge of the agent~\cite{Xu2019TransferLearning} between environments could also be interesting.

\section*{Acknowledgments}
This work was supported by the EPSRC Prosperity Partnership FAIR (grant number EP/V056883/1). DS acknowledges funding from the Turing Institute and Accenture collaboration.  
MK receives funding from the ERC under the European Union’s Horizon 2020 research and innovation programme (\href{http://www.fun2model.org}{FUN2MODEL}, grant agreement No.~834115).

\bibliographystyle{named}
\bibliography{ijcai23}

\appendix
\onecolumn
\section{Theoretical Results: Proofs}
\subsection{Proof of Theorem~\ref{thm:equivalence}}
\label{proof:equivalence}
\begin{proof}

We will prove the equality by verifying both sides of the inequality and subsequently constructing the induced policy on $\MDP$.

For $\leq$, it has been proven~\cite{Sickert2016Limit-deterministicLogic} that, for every accepting path of the environment MDP $\MDP$ following a policy $\policy$, there always exists a corresponding accepting run $\Autorun$ of the LDBA that resolves the non-determinism. Augmenting the resolved non-determinism as $\epsilon$-actions to the original path yields an accepting path of $\productMDP$, since the counter will eventually reach $K$ and the accepting states $\productAccept$ will be visited infinitely often. Therefore, we have created a policy for $\productMDP$ that is at least as good as $\policy$ on $\MDP$.

For $\geq$, it is clear that any policy $\policy$ on $\productMDP$ can induce a policy for $\MDP$ by eliminating the $\epsilon$-transitions and removing the projection of the automaton and $K$ counter. Therefore, any path following $\policy$ that meets the Büchi condition $\LTL_\autoAccept$ will induce a path of $\MDP$ that is accepting by $\automaton$ induced from $\LTL$, where the non-determinism of $\automaton$ is resolved by $\epsilon$-transitions of $\policy$, thus satisfying $\LTL$.

\end{proof}

\subsection{Proof of Lemma~\ref{useful_lemma}}
\label{proof:useful_lemma}

\begin{proof}
To begin with, we recall the property of BSCCs in Markov chains (MC): for any infinite path $\MDPpath$ of a MC, a BSCC will eventually be reached, and once reached it can't reach any state outside the BSCC and all states within it will be visited infinitely often with probability 1. Therefore, since all $\productState\in B_\productAccept$ belongs to BSCCs with accepting states, all paths from $\productState$ will reach accepting states infinitely often with probability 1, so $\probP_\policy(\productState\models\textsf{\upshape G}\textsf{\upshape F }\LTL_\autoAccept)=1 \;\forall \productState\in B_\productAccept$. Similarly, all $\productState\in B^\times_\varnothing$ belong to BSCCs with no accepting states, so $\probP_\policy(\productState\models\textsf{\upshape G}\textsf{\upshape F }\LTL_\autoAccept)=0 \;\forall \productState\in B^\times_\varnothing$.


Now, we observe that, for states in $\productMDP_\policy$, the Markov chain transitions for environment states and automaton states are in fact independent of the counter value:
\begin{align}
&\productTransitions_\policy((\state,\autoState,n_1),(\state^\prime,\autoState^\prime,\min(n_1+1,K)))=\productTransitions_\policy((\state,\autoState,n_2),(\state^\prime,\autoState^\prime,\min(n_2+1,K)))\label{eq_lemma}
\end{align} 
for all $n_1,n_2\in[0..K], \state,\state^\prime\in\states$ and $\autoState,\autoState^\prime\in\autoStates$. With this observation, for all $(\state,\autoState,n)\in B_\autoAccept$, there exists $(\state,\autoState,n^\prime)\in B_\productAccept$, which belongs to some BSCC by Line~\ref{B_accept}. Therefore, any state $(\state_1,\autoState_1,n_1)$ reachable from $(\state,\autoState,n)$ by Equation~\ref{eq_lemma} implies there exists $n_2$ such that $(\state_1,\autoState_1,n_2)$ is reachable from $(\state,\autoState,n^\prime)$. This,  by the definition of BSCC, means $(\state_1,\autoState_1,n_2)\in B_\productAccept$, which implies $(\state_1,\autoState_1,n_1)\in B_\autoAccept$ by Line~\ref{B_accept}, which shows that $B_\autoAccept$ is a sink set. Furthermore, since $(\state,\autoState,n^\prime)\in B_\productAccept$ belongs to an accepting BSCC, it can reach an accepting state $(\state_\autoAccept,\autoState_\autoAccept,n_\autoAccept)\in B_\productAccept$ in that BSCC with probability 1. By Equation~\ref{eq_lemma}, there exists $(\state_\autoAccept,\autoState_\autoAccept,n_3)\in B_\autoAccept$ reachable from $(\state,\autoState,n)$ with probability 1 and, by the accepting states of Definition~\ref{def:productMDP} of product MDP, $(\state_\autoAccept,\autoState_\autoAccept,n_3)$ is also accepting, meaning accepting states can be reached from any state in $B_\autoAccept$ with probability 1. Together with the fact that $B_\autoAccept$ is a sink set, we conclude that accepting states can be reached infinitely often with probability 1 from $B_\autoAccept$, meaning $\probP_\policy(\productState\models\textsf{\upshape G}\textsf{\upshape F }\LTL_\autoAccept)=1 \;\forall \productState\in B_\autoAccept$. Another observation is that, since the counter values are increasing for all paths and accepting states will be reached infinitely often in accepting BSCCs, all states in accepting BSCCs must have counter value equal to $K$.

Using the same argument, we conclude that $B_\varnothing$ is also a sink set. In addition, for $(\state,\autoState,n)\in B_\varnothing$, there exists $(\state,\autoState,n^\prime)\in B^\times_\varnothing$ which belongs to a rejecting BSCC such that no accepting states can be reached from it. By Equation~\ref{eq_lemma} and the accepting states definition of product MDP~(Definition~\ref{def:productMDP}), we see that no accepting state can be reached from $(\state,\autoState,n)$ either, which implies $\probP_\policy(\productState\models\textsf{\upshape G}\textsf{\upshape F }\LTL_\autoAccept)=0\;\forall\productState\in B_\varnothing$ and $B_\varnothing\cap\productAccept=\varnothing$, which completes the proof.
\end{proof}

\subsection{Proof of Theorem~\ref{thm:optimality}}
\label{proof:optimality}

\begin{proof}
To give an outline of the proof, we would like to show that, for any deterministic memoryless policy $\policy$, the expected discounted reward  $G^\policy_0(\productState)$ for all product states with counter value 0, \textit{i.e.} $\productState=(\state,\autoState,0)$, is close to the probability $\probP_\policy(\productState\models\textsf{\upshape G}\textsf{\upshape F }\LTL_\autoAccept)$ of satisfying $\LTL_\autoAccept$ starting from $\productState$ following $\policy$. We show this by upper and lower bounding the difference between the two quantities.

From $\productState=(\state,\autoState,0)$, we consider the expected discounted reward conditioned on whether the infinite path following policy $\policy$ satisfies the Büchi condition $\LTL_\autoAccept$ or not.
\begin{align}
G^\policy_0(\productState)&=G^\policy_0(\productState\mid\productState\models\textsf{\upshape G}\textsf{\upshape F}\LTL_\autoAccept)\probP_\policy(\productState\models\textsf{\upshape G}\textsf{\upshape F}\LTL_\autoAccept)\label{satisfy_condition}\\
&+G^\policy_0(\productState\mid\productState\not\models\textsf{\upshape G}\textsf{\upshape F}\LTL_\autoAccept)\probP_\policy(\productState\not\models\textsf{\upshape G}\textsf{\upshape F}\LTL_\autoAccept)\label{unsatisfy_condition}
\end{align}

We first consider the components of Line~\ref{satisfy_condition}. Let the stopping time of first reaching the accepting set $B_\autoAccept$ from $\productState$ be $\tau^{\policy}_{\autoAccept}\defeq\inf\{t>0\mid\productState_t\in B_\autoAccept\}$ and let the first reached state in $B_\autoAccept$ be $(\state,\autoState,n)$. Then, let the hitting time between accepting states in the accepting BSCC $B$ containing $(\state,\autoState,n)\in B^\times_\autoAccept$ be $\tau^{\policy}_{BSCC}$. Furthermore, let ${c^{\policy}_{\autoAccept acc}\defeq\left\lvert t\leq\tau^{\policy}_\autoAccept : \productState_t\in\productAccept\right\rvert}$ and ${c^{\policy}_{\autoAccept rej}\defeq\left\lvert t\leq\tau^{\policy}_\autoAccept : \productState_t\notin\productAccept\right\rvert}$ be the number of accepting and non-accepting states reached before reaching $B_\autoAccept$ from $\productState$ respectively, where ${c^{\policy}_{\autoAccept acc}+c^{\policy}_{\autoAccept rej}=\tau^{\policy}_\autoAccept}$. Note that all the quantities above depend on the starting state $\productState$ but for convenience we omit it in the notations. Then, we have

\begin{align}
&G^\policy_0(\productState\mid\productState\models\textsf{\upshape G}\textsf{\upshape F}\LTL_\autoAccept)\\
&=\expectE_\policy\left[\sum^\infty_{i=0} (\prod_{j=0}^{i-1}\discount(\productState_j))\cdot\productReward_{i}\bigm\vert \productState_0=\productState\models\textsf{\upshape G}\textsf{\upshape F}\LTL_\autoAccept|\right]\\
&\geq \expectE_\policy\left[\gamma^{c^{\policy}_{\autoAccept rej}}\left(\sum^{c^{\policy}_{\autoAccept acc}}_{n=0}(\prod_{j=0}^{n-1}(1-R_j))R_n\right)+\sum^\infty_{i=\tau^{\policy}_{\autoAccept}}(\prod_{j=0}^{i-1}\discount(\productState_j))\cdot\productReward_i\bigm\vert \productState_0=\productState\models\textsf{\upshape G}\textsf{\upshape F}\LTL_\autoAccept) \right]\label{sat_line2}\\
&\geq \expectE_\policy\left[\gamma^{c^{\policy}_{\autoAccept rej}}\left(\sum^{c^{\policy}_{\autoAccept acc}}_{n=0}(\prod_{j=0}^{n-1}(1-R_j))R_n+\sum^\infty_{i=\tau^{\policy}_{\autoAccept}}(\prod_{j=0}^{c^{\policy}_{\autoAccept acc}}(1-R_j)\prod_{k=\tau^{\policy}_{\autoAccept}}^{i-1}\discount(\productState_k))\cdot\productReward_i\right)\bigm\vert \productState_0=\productState\models\textsf{\upshape G}\textsf{\upshape F}\LTL_\autoAccept)\right]\label{sat_line4}\\
&\geq \expectE_\policy\left[\gamma^{c^{\policy}_{\autoAccept rej}}\left(\sum^{c^{\policy}_{\autoAccept acc}}_{n=0}(\prod_{j=0}^{n-1}(1-R_j))R_n+\gamma^{\tau^{\policy}_{BSCC}/U}\sum^\infty_{n=c^{\policy}_{\autoAccept acc}}(\prod_{j=0}^{n-1}(1-R_j))R_n\right)\bigm\vert \productState_0=\productState\models\textsf{\upshape G}\textsf{\upshape F}\LTL_\autoAccept)\right]\label{sat_line7}\\
    &\geq \gamma^{C^{\policy}_{\autoAccept}+N^\policy/U} \left(\sum^\infty_{n=0}(\prod_{j=0}^{n-1}(1-R_j))R_n\right)\label{sat_line8}\\
    &=\gamma^{C^{\policy}_{\autoAccept}+N^\policy/U}\cdot 1 \label{sat_line9}
\end{align}
where $C^{\policy}_{\autoAccept}=\expectE_\policy[c^{\policy}_{\autoAccept rej}\mid\productState_0=\productState\models\textsf{\upshape G}\textsf{\upshape F}\LTL_\autoAccept)]$, $N^\policy=\expectE_\policy[\tau^{\policy}_{BSCC}\mid\productState_0=\productState\models\textsf{\upshape G}\textsf{\upshape F}\LTL_\autoAccept)]$, $R_n$ is the non-zero rewards in Definition~\ref{reward_defn} and we slightly abuse the notation to let $R_n=R_K \;\forall n>K$. The inequality in Line~\ref{sat_line2} holds because, for the first $\tau^{\policy}_{\autoAccept}$ elements in the sum, there are at most $c^{\policy}_{\autoAccept rej}$ zero reward discount $\gamma$ terms in the product and taking them out of the sum leaves only the non-zero reward terms. Line~\ref{sat_line4} holds similarly by first taking $c^{\policy}_{\autoAccept rej}$ discount $\gamma$ terms out of the product for the rest of the sum, leaving the $(1-R_j)$ terms and the rest of the discount factors $\discount(\productState_k)$. Line~\ref{sat_line7} holds by Lemma~\ref{useful_lemma} and the observation that, after reaching the accepting BSCC, each non-zero reward will receive $\gamma^{\tau^{\policy}_{BSCC}}$ additional discount between accepting states. In addition, by summing the infinite sequences we find $U/(1-\gamma^{\tau^{\policy}_{BSCC}}(1-U))\leq \gamma^{\tau^{\policy}_{BSCC}/U}$ for $\gamma\in(0,1)$ and $U\in(0,1)$, upper bounding the additional discount and leaving only the non-zero reward discount factors $1-R_j$. Finally, Line~\ref{sat_line9} holds by induction because the infinite geometric sum $\sum^\infty_{n=0}(1-R_K)^n R_K=1$ and $R_n+(1-R_n)*1=1$ for all $n<K$.

Intuitively, the expected discounted reward for paths with only accepting states is 1, and we lower bound the expected discounted reward for general paths satisfying $\LTL_\autoAccept$ by bounding the amount of discount the reward receives from non-zero reward states.

Next, we consider the components of Line~\ref{unsatisfy_condition}. Let the stopping time of first reaching the rejecting set $B_\varnothing$ from $\productState$ be
$\tau^{\policy}_\varnothing\defeq\inf\{t>0\mid\productState_t\in B_\varnothing\}$ and let ${c^{\policy}_{\varnothing acc}\defeq\left\lvert t\leq\tau^{\policy}_\varnothing: \productState_t\in\productAccept\right\rvert}$ be the number of accepting states reached before reaching $B_\varnothing$ from $\productState$. We similarly omit the dependency on $\productState$ in the notation. Then, we have

\begin{align}
G^\policy_0(\productState\mid\productState\not\models\textsf{\upshape G}\textsf{\upshape F}\LTL_\autoAccept)&=\expectE_\policy\left[ \sum^\infty_{i=0}(\prod_{j=0}^{i-1}\discount(\productState_j))\cdot\productReward_{i}\bigm\vert \productState_0=\productState\not\models\textsf{\upshape G}\textsf{\upshape F}\LTL_\autoAccept)\right]\\
    &\leq \expectE_\policy\left[\sum^{c^{\policy}_{\varnothing acc}}_{n=0}(\prod_{j=0}^{n-1}(1-R_j))\cdot R_n \bigm\vert \productState_0=\productState\not\models\textsf{\upshape G}\textsf{\upshape F}\LTL_\autoAccept)\right]\label{unsat:line2}\\
    &= \sum^{C^\policy_\varnothing}_{n=0}(\prod_{j=0}^{n-1}(1-R_j))\cdot R_n\label{unsat:line3}\\
    &\leq \sum^{C^\policy_\varnothing}_{n=0}(1-U)^{n}\cdot U\label{unsat:line4}\\
    &=1-(1-U)^{C^\policy_\varnothing}
\end{align}
where $C^\policy_\varnothing=\expectE_\policy[c^{\policy}_{\varnothing acc}\mid \productState_0=\productState\not\models\textsf{\upshape G}\textsf{\upshape F}\LTL_\autoAccept]$. In Line~\ref{unsat:line2}, the inequality holds because, by Lemma~\ref{useful_lemma}, once $B_\varnothing$ is reached, no further accepting states with non-zero rewards can be reached and the total reward can be bounded above by omitting the discount factor $\discount$ from non-accepting states and summing the discounted reward only for the $c^{\policy}_{\varnothing acc}$ accepting states reached before $B_\varnothing$. Line~\ref{unsat:line3} holds by taking expectation on $\tau^{\policy}_\varnothing$ and Line~\ref{unsat:line4} holds by induction because $R_n+(1-R_n)*X_1\leq U+(1-U)*X_2$ for all $n\in [0..K]$ if $X_1\leq X_2$, and $U\geq R_n$ is an upper bound for all rewards. Intuitively, we have bounded the amount of non-zero reward received by paths not satisfying $\LTL_\autoAccept$.

Therefore, from Equation~\ref{satisfy_condition} and these technical results, we have a lower bound for the expected discounted reward
\begin{align}
G^\policy_0(\productState)\geq\gamma^{C^{\policy}_{\autoAccept}+N^\policy/U}\cdot\probP_\policy(\productState\models\textsf{\upshape G}\textsf{\upshape F}\LTL_\autoAccept)
\label{lower_bounds}
\end{align}
by assuming $G^\policy_0(\productState\mid\productState\not\models\textsf{\upshape G}\textsf{\upshape F}\LTL_\autoAccept)=0$, and an upper bound of the expected discounted reward:    
\begin{align}
    G^\policy_0(\productState)&\leq\probP_\policy(\productState\models\textsf{\upshape G}\textsf{\upshape F}\LTL_\autoAccept)+(1-(1-U)^{C^\policy_\varnothing})\cdot\probP_\policy(\productState\not\models\textsf{\upshape G}\textsf{\upshape F}\LTL_\autoAccept)\\
    &=\probP_\policy(\productState\models\textsf{\upshape G}\textsf{\upshape F}\LTL_\autoAccept)+(1-(1-U)^{C^\policy_\varnothing})\cdot(1-\probP_\policy(\productState\models\textsf{\upshape G}\textsf{\upshape F}\LTL_\autoAccept))\\
    &=1-(1-U)^{C^\policy_\varnothing}+\probP_\policy(\productState\models\textsf{\upshape G}\textsf{\upshape F}\LTL_\autoAccept)\cdot (1-U)^{C^\policy_\varnothing}
\label{upper_bounds}
\end{align}

Last but not least, since deterministic memoryless policy is considered in a finite MDP, there is a finite set of policies and we let the difference in probability of satisfying $\LTL_\autoAccept$ between the optimal policy and the best sub-optimal policy be
$\delta\defeq\probP_{\policy_\LTL}(\productState_0\models\textsf{\upshape G}\textsf{\upshape F }\LTL_\autoAccept)-\max_{\{\policy\neq\policy_\LTL\}}\probP_\policy(\productState_0\models\textsf{\upshape G}\textsf{\upshape F }\LTL_\autoAccept)$. We can let the reward upper bound $U\in(0,1)$ be small enough and $\discount\in(0,1)$ large enough such that following the bounds of Line~\ref{lower_bounds} and Line~\ref{upper_bounds}, we have that
\begin{equation}
    G^\policy_0(\productState)\leq\max_{\policy\neq\policy_\LTL}\{\probP_\policy(\productState_0\models\textsf{\upshape G}\textsf{\upshape F }\LTL_\autoAccept)\}+\delta=\probP_{\policy_\LTL}(\productState_0\models\textsf{\upshape G}\textsf{\upshape F }\LTL_\autoAccept)\leq G^{\policy_\LTL}_0(\productState) \quad\forall \policy\neq\policy_\LTL
\end{equation}
This means the expected discounted reward for the optimal policy $\policy_\LTL$ must be greater than the expected discounted reward received by any sub-optimal policy. We can therefore conclude that the policy $\policy_\reward$ maximizes the total expected discounted reward $G^{\policy_\reward}_0(\productState)$ is also the optimal policy $\policy_\LTL$ that maximizes the probability of satisfying the Büchi condition $\LTL_\autoAccept$ on the product MDP $\productMDP$, which completes the proof.

For choosing the key parameters $U\in(0,1)$ and $\discount\in(0,1)$ to ensure optimality, if we assume a reasonable gap $\delta\approx0.5$ between the optimal and sub-optimal policies, it generally suffices to let $U=1/C^\policy_\varnothing$ and $\discount=1-1/(C^\policy_\varnothing*N^\policy+C^\policy_\autoAccept)$. For example, with the example probabilistic gate MDP in Figure~\ref{fig:example_mdp}, we have $C^\policy_\varnothing\approx C^\policy_\autoAccept\approx 10$ and $N^\policy=1$, so choosing $U=0.1$ and $\gamma=0.95$ is sufficient. If the hitting times and stopping times are not obtainable, under the common assumption for MDPs that the number of states $\abs{\states}$ in $\MDP$ and the minimum non-zero transition probability $p_{\min}\coloneqq \min_{s,a,s^\prime}\{\transitions(\state,\action,\state^\prime)>0\}$ are known, $C^\policy_\varnothing$ and $C^\policy_\autoAccept$ can be upper bounded by $\abs{\states}/p_{\min}$ and $N^\policy$ can be upper bounded by $\abs{\states}$.
\end{proof}

\subsection{Proof of theorem~\ref{thm:q_learning}}
\label{proof:q_learning}

\begin{proof}
To begin with, recall from Theorem~\ref{thm:equivalence} and Theorem~\ref{thm:optimality} that the policy $\policy_r$ maximizing the expected discounted reward of $\productMDP$ induces the optimal policy for satisfying $\LTL$ in $\MDP$ by removing the $\epsilon$-actions.

Next, note that, despite the reward for Q-learning in Algorithm~\ref{alg:Q_learning} being non-Markovian, following the proof of convergence to optimal Q value for non-Markovian Q-learning~\cite{Majeed2018OnProcesses}, with the $K$ counter in Definition~\ref{def:productMDP} and the reward in Definition~\ref{reward_defn}, we verify that the Q function in Algorithm~\ref{alg:Q_learning} converges to the optimal Q value for all $(\state,\autoState)$ and action $\productAction$, which is the expected discounted reward by taking action $\productAction$ from $(\state,\autoState,0)$.

With this, recall that the environment states and automaton states transitions do not depend on the counter value by Equation~\ref{eq_lemma} and the non-zero reward is the same for accepting states with the same counter value by Definition~\ref{reward_defn}. We argue that, for the optimal discounted reward policy $\policy_r$, the new policy $\policy^\prime((\state,\autoState,n))=\policy_r((\state,\autoState,0))\;\forall n\in[0..K]$, where the policy for each counter values is set to be the same, remains optimal. This is because, if $\policy^\prime$ is sub-optimal, meaning the satisfiability of $\LTL_\autoAccept$ is lower than that of the optimal policy, it must be that $\policy^\prime$ makes a wrong decision at some counter value such that the reachability probability to some accepting BSCC $B$ is sub-optimal. This means $\policy_r$ must also be sub-optimal because when counter value equals 0, by taking the same actions as $\policy^\prime$, the reachability probability to the BSCC $B$ is sub-optimal, yielding a contradiction. Lastly, since $\policy^\prime$ is independent from the counter values, the policy derived from the collapsed optimal Q function learned by Algorithm~\ref{alg:Q_learning} is the same as the optimal $\policy^\prime$ by ignoring the $K$ counter, meaning that Algorithm~\ref{alg:Q_learning} converges to the optimal policy, completing the proof.

\end{proof}

\section{Additional Experimental Details And Setup}
\label{appendix:exp_setup}

Experiments are carried out on a Linux server (Ubuntu 18.04.2) with two Intel Xeon Gold 6252 CPUs and six NVIDIA GeForce RTX 2080 Ti GPUs. All our algorithms are implemented in Python, and the full source code can be found in our supplementary material. We select three stochastic environments (probabilistic gate, frozen lake and office world) described below with several difficult LTL tasks. For all environments and tasks we set the learning rate $\alpha=0.1$ and the exploration rate $\epsilon=0.1$, with the upper bound on rewards $U=0.1$ and discount factor $\gamma=0.99$ to ensure optimality. For the method proposed by Hahn \textit{et al.}~\shortcite{Hahn2019Omega-regularLearning}, we adopt their implementation from the Mungojerrie tool\footnote{https://plv.colorado.edu/wwwmungojerrie/docs/v1\_0/index.html}. We set the default parameter values to be $\alpha=0.1$ and $\epsilon=0.1$, with $\zeta=0.995$ for the probabilistic gate task and $\zeta=0.99$ for the frozen lake and office world tasks. For the method by Hasanbeig \textit{et al.}~\shortcite{Hasanbeig2020DeepLogics}, we use the implementation in their GitHub repository\footnote{https://github.com/grockious/lcrl} with $\gamma=0.99$, $\alpha=0.1$ and $\epsilon=0.1$.
For the method by Bozkurt \textit{et al.}~\shortcite{Bozkurt2019ControlLearning}, we also adopt their GitHub repository\footnote{https://github.com/alperkamil/csrl} with $\gamma=0.99$, $\gamma_B=0.9$, $\alpha=0.1$ and $\epsilon=0.1$.

For the environments and tasks used for experimentation, we specifically choose commonly used environments with stochastic transitions that are more realistic and challenging, especially for infinite-horizon LTL tasks. The LTL tasks we use are standard task specifications for robotics and autonomous systems. The tasks for the frozen lake and the office world environments are more challenging because they involve letting the agent choose between two subtasks, which better showcase the benefits of the $K$ counter and counterfactual imagining.

\subsection{Probabilistic Gate}
\label{appendix:probablistic}

\begin{figure*}[tb]
\centering
\includegraphics[width=0.5\textwidth]{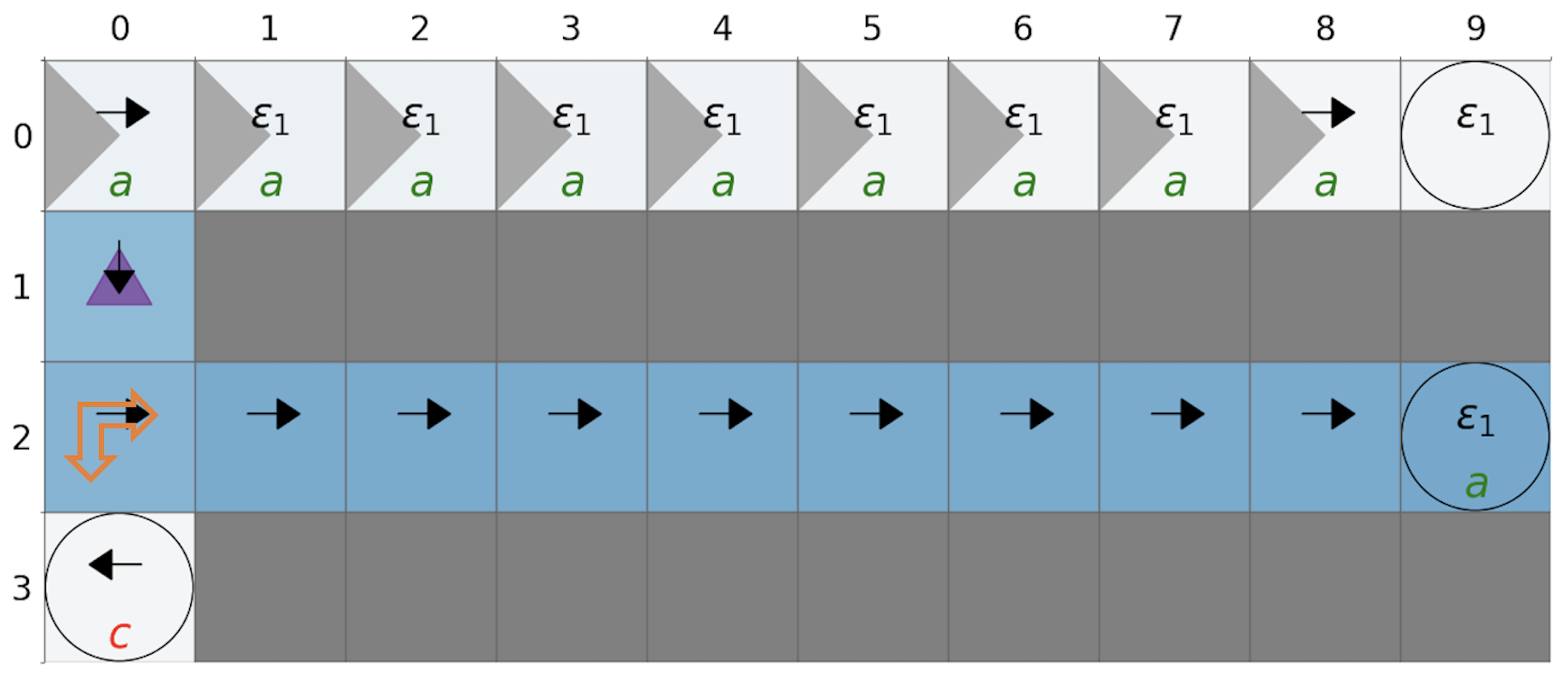}
\caption{The optimal policy for the example probabilistic gate MDP task\ref{fig:example_task}}
\label{fig:hard1_policy}
\end{figure*}

The probabilistic gate MDP is described in Example~\ref{example_mdp}, where the task is to visit states labeled \enquote{a} infinitely often without visiting states labeled \enquote{c}. We set the episode length to be 100 (which is the maximum length of a path in the MDP) and the number of episodes to be 40000. In Figure~\ref{fig:hard1_policy}, the optimal policy is demonstrated by the black arrows, where the agent goes down from the start and before going all the way to the right to (2,9), which is an accepting sink state.

\subsection{Frozen Lake}
\label{appendix:frozenlake}

\begin{figure*}[tb]
\centering
\begin{subfigure}[t]{0.45\textwidth}
\centering
\includegraphics[width=0.8\textwidth]{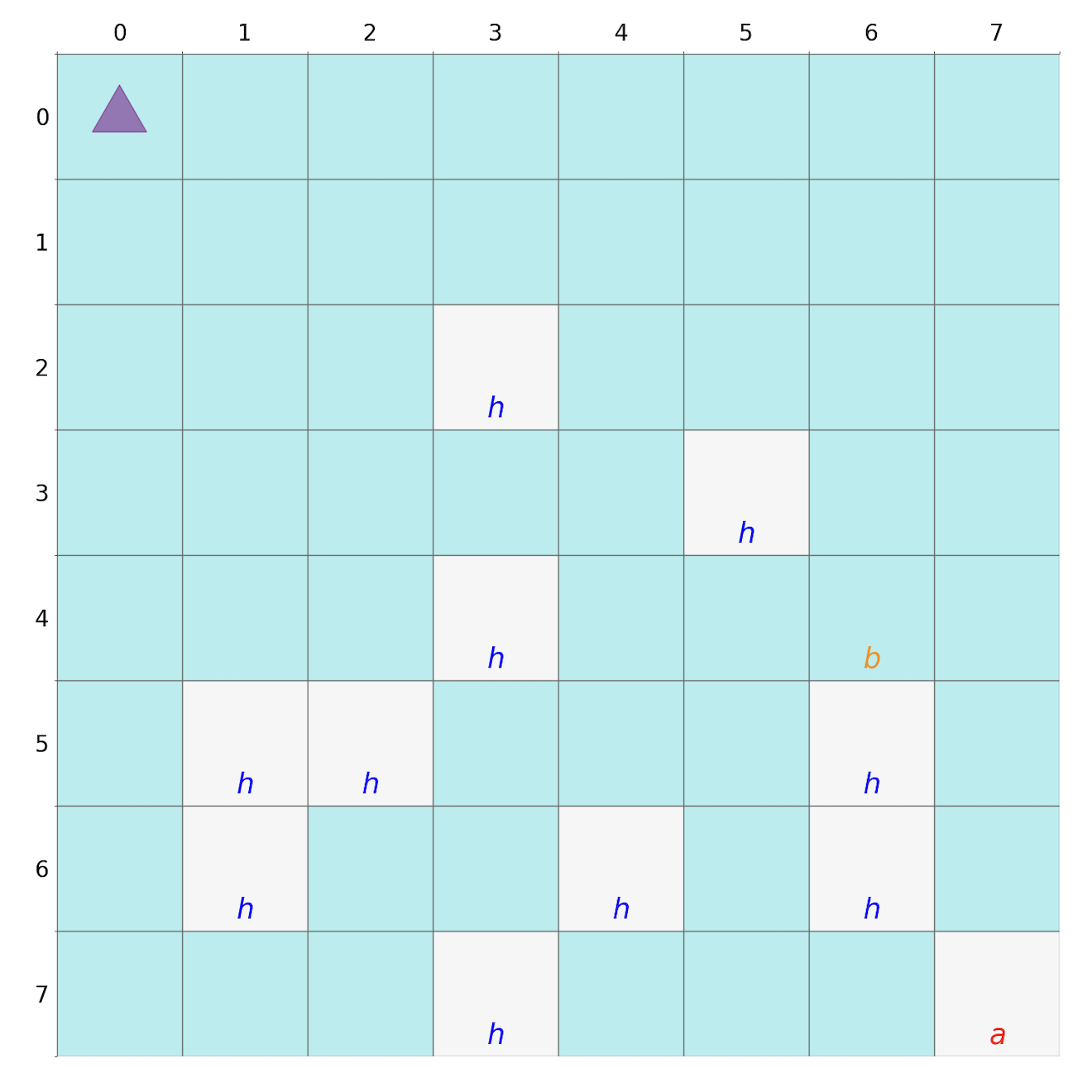}
\caption{The MDP environment for the frozen lake task. Blue represents ice, \enquote{h} are holes, \enquote{a} and \enquote{b} are lake camps, and the purple triangle is the start.}
\label{fig:frozen_env}
\end{subfigure}
\hspace{0.5cm}
\begin{subfigure}[t]{0.5\textwidth}
\centering
\includegraphics[width=\textwidth]{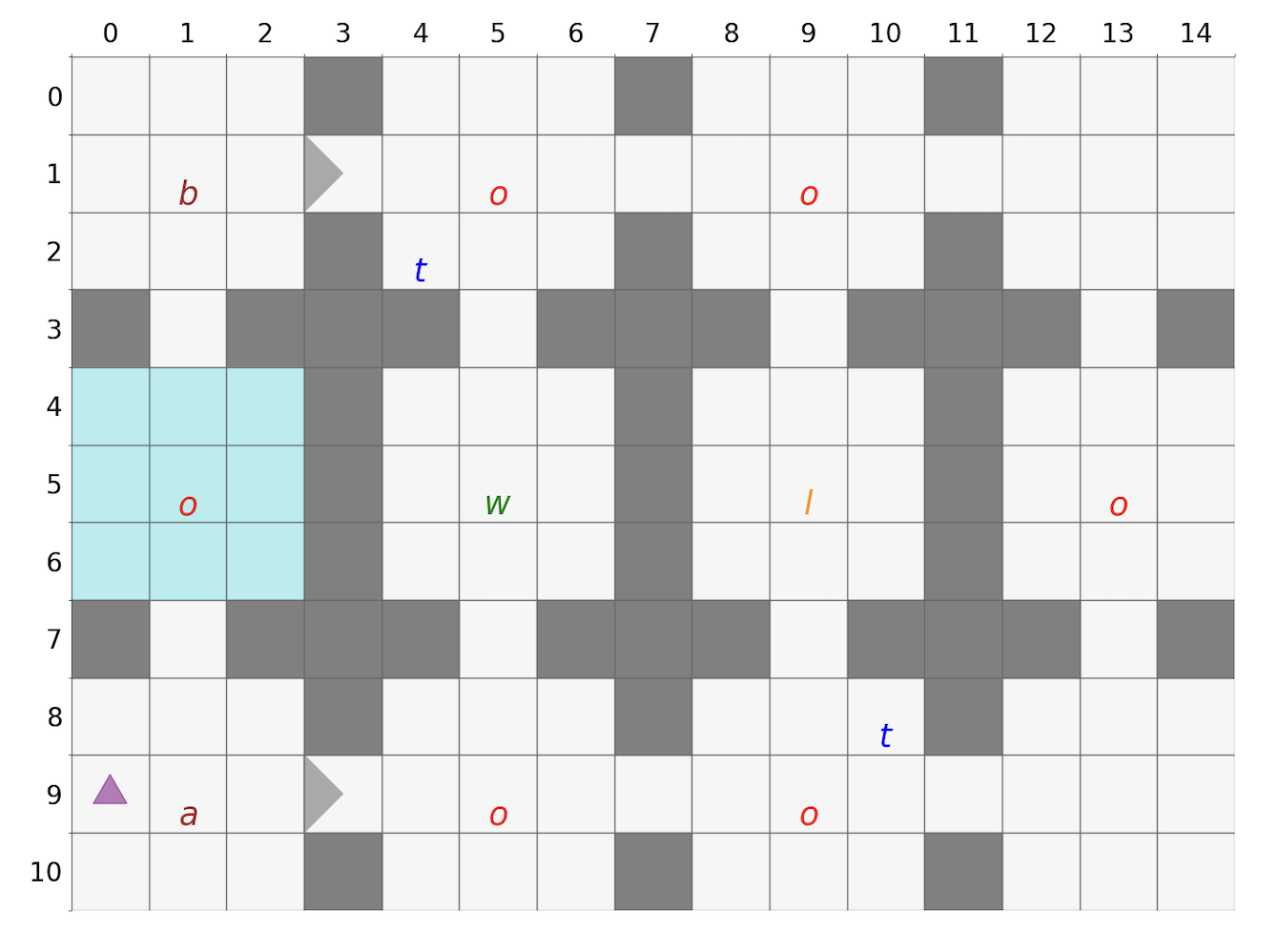}
\caption{The MDP environment for the office world task. Blue represents ice, \enquote{o} are obstacles, \enquote{w}, \enquote{l} and \enquote{t} represent workplace, letter and tea respectively, and the purple triangle is the start.}
\label{fig:office_env}
\end{subfigure}
\caption{MDP environments used in the experiments.}
\end{figure*}

The frozen lake~\cite{Brockman2016OpenAIGym} environment is shown in Figure~\ref{fig:frozen_env}. The blue states represent the frozen lake, where the agent has 1/3 probability of moving in the intended direction and 1/3 each of going sideways (left or right). The white states with label \enquote{h} are holes and states with label \enquote{a} and \enquote{b} are lake camps. The task is \enquote{(\textsf{\upshape G}\textsf{\upshape F} a $\mid$ \textsf{\upshape G}\textsf{\upshape F} b) \& \textsf{\upshape G} !h}, meaning to always reach lake camp \enquote{a} or lake camp \enquote{b} while never falling into holes \enquote{h}. For this task, we set the episode length to be 200 and the number of episodes to be 6000.

\subsection{Office World}
\label{appendix:officeworld}

The office world~\cite{Icarte2022RewardLearning} environment is demonstrated in Figure~\ref{fig:office_env}. We include a patch of ice labeled in blue as defined in the frozen lake environment and two one-directional gates at $(1,3)$ and $(9,3)$, where the agent can only cross to the right as shown by the direction of the gray triangle. The gray blocks are walls, the states labeled \enquote{o}, \enquote{l}, \enquote{t} and \enquote{w} are obstacles, letters, tea and the workplace respectively. The task for this environment is also demanding: {\enquote{(\textsf{\upshape G}\textsf{\upshape F} a \& \textsf{\upshape G}\textsf{\upshape F} b) $\mid$ (\textsf{\upshape F} l \& \textsf{\upshape X} (\textsf{\upshape G}\textsf{\upshape F} t \& \textsf{\upshape G}\textsf{\upshape F} w)) \& \textsf{\upshape G} !o}}. This means to either patrol in the corridor between \enquote{a} and \enquote{b}, or go to write letter at \enquote{l} and then patrol between getting tea \enquote{t} and workplace \enquote{w}, whilst never hitting obstacles \enquote{o}. For this task, we set the episode length to be 1000 and the number of episodes to be 6000.

\subsection{PRISM}
We use PRISM version 4.7~\cite{Kwiatkowska2011PRISMSystems} to evaluate the learned policy against the LTL tasks. The PRISM tool and the installation documentations can be obtained from the their official website\footnote{https://www.prismmodelchecker.org/download.php}. Each time we would like to evaluate the policy on the environment MDP, our Python code autonomously constructs the induced Markov chain from the MDP and the policy as a discrete-time Markov chain in the PRISM language, and evaluates this PRISM model against the LTL task specification through a call from Python. The max iteration parameter for PRISM is set to 100000, and we evaluate the current policy every 10000 training steps for plotting the training graph for all our experiments.

\subsubsection{Example PRISM model}
We provide an example PRISM model of the induced Markov chain from the probabilistic gate MDP and its optimal policy.
\begin{lstlisting}[caption={The PRISM model of the induced Markov chain from the probabilistic gate MDP and its optimal policy.},label={lst:PRISM},language=c++]
dtmc
module ProductMDP
    m : [0..40] init 10;
    a : [0..3] init 0;
    [ep] (m=0)&(a=0) -> (m'=0)&(a'=1);
    [ac] (m=0)&(a=1) -> 1 : (m'=1)&(a'=1);
    [ac] (m=0)&(a=2) -> 1 : (m'=1)&(a'=2);
    [ep] (m=1)&(a=0) -> (m'=1)&(a'=1);
    [ac] (m=1)&(a=1) -> 1 : (m'=2)&(a'=1);
    [ac] (m=1)&(a=2) -> 1 : (m'=2)&(a'=2);
    [ep] (m=2)&(a=0) -> (m'=2)&(a'=1);
    [ac] (m=2)&(a=1) -> 1 : (m'=3)&(a'=1);
    [ac] (m=2)&(a=2) -> 1 : (m'=3)&(a'=2);
    [ep] (m=3)&(a=0) -> (m'=3)&(a'=1);
    [ac] (m=3)&(a=1) -> 1 : (m'=4)&(a'=1);
    [ac] (m=3)&(a=2) -> 1 : (m'=4)&(a'=2);
    [ep] (m=4)&(a=0) -> (m'=4)&(a'=1);
    [ac] (m=4)&(a=1) -> 1 : (m'=5)&(a'=1);
    [ac] (m=4)&(a=2) -> 1 : (m'=5)&(a'=2);
    [ep] (m=5)&(a=0) -> (m'=5)&(a'=1);
    [ac] (m=5)&(a=1) -> 1 : (m'=6)&(a'=1);
    [ac] (m=5)&(a=2) -> 1 : (m'=6)&(a'=2);
    [ep] (m=6)&(a=0) -> (m'=6)&(a'=1);
    [ac] (m=6)&(a=1) -> 1 : (m'=7)&(a'=1);
    [ac] (m=6)&(a=2) -> 1 : (m'=7)&(a'=2);
    [ep] (m=7)&(a=0) -> (m'=7)&(a'=1);
    [ac] (m=7)&(a=1) -> 1 : (m'=8)&(a'=1);
    [ac] (m=7)&(a=2) -> 1 : (m'=8)&(a'=2);
    [ep] (m=8)&(a=0) -> (m'=8)&(a'=1);
    [ac] (m=8)&(a=1) -> 1 : (m'=9)&(a'=1);
    [ac] (m=8)&(a=2) -> 1 : (m'=9)&(a'=2);
    [ep] (m=9)&(a=0) -> (m'=9)&(a'=1);
    [ac] (m=9)&(a=1) -> 1.0 : (m'=9)&(a'=2);
    [ac] (m=9)&(a=2) -> 1.0 : (m'=9)&(a'=2);
    [ac] (m=10)&(a=0) -> 1 : (m'=20)&(a'=0);
    [ac] (m=10)&(a=1) -> 1 : (m'=20)&(a'=2);
    [ac] (m=10)&(a=2) -> 1 : (m'=20)&(a'=2);
    [ac] (m=11)&(a=0) -> 1.0 : (m'=11)&(a'=0);
    [ac] (m=11)&(a=1) -> 1.0 : (m'=11)&(a'=2);
    [ac] (m=11)&(a=2) -> 1.0 : (m'=11)&(a'=2);
    [ac] (m=12)&(a=0) -> 1.0 : (m'=12)&(a'=0);
    [ac] (m=12)&(a=1) -> 1.0 : (m'=12)&(a'=2);
    [ac] (m=12)&(a=2) -> 1.0 : (m'=12)&(a'=2);
    [ac] (m=13)&(a=0) -> 1.0 : (m'=13)&(a'=0);
    [ac] (m=13)&(a=1) -> 1.0 : (m'=13)&(a'=2);
    [ac] (m=13)&(a=2) -> 1.0 : (m'=13)&(a'=2);
    [ac] (m=14)&(a=0) -> 1.0 : (m'=14)&(a'=0);
    [ac] (m=14)&(a=1) -> 1.0 : (m'=14)&(a'=2);
    [ac] (m=14)&(a=2) -> 1.0 : (m'=14)&(a'=2);
    [ac] (m=15)&(a=0) -> 1.0 : (m'=15)&(a'=0);
    [ac] (m=15)&(a=1) -> 1.0 : (m'=15)&(a'=2);
    [ac] (m=15)&(a=2) -> 1.0 : (m'=15)&(a'=2);
    [ac] (m=16)&(a=0) -> 1.0 : (m'=16)&(a'=0);
    [ac] (m=16)&(a=1) -> 1.0 : (m'=16)&(a'=2);
    [ac] (m=16)&(a=2) -> 1.0 : (m'=16)&(a'=2);
    [ac] (m=17)&(a=0) -> 1.0 : (m'=17)&(a'=0);
    [ac] (m=17)&(a=1) -> 1.0 : (m'=17)&(a'=2);
    [ac] (m=17)&(a=2) -> 1.0 : (m'=17)&(a'=2);
    [ac] (m=18)&(a=0) -> 1.0 : (m'=18)&(a'=0);
    [ac] (m=18)&(a=1) -> 1.0 : (m'=18)&(a'=2);
    [ac] (m=18)&(a=2) -> 1.0 : (m'=18)&(a'=2);
    [ac] (m=19)&(a=0) -> 1.0 : (m'=19)&(a'=0);
    [ac] (m=19)&(a=1) -> 1.0 : (m'=19)&(a'=2);
    [ac] (m=19)&(a=2) -> 1.0 : (m'=19)&(a'=2);
    [ac] (m=20)&(a=0) -> 0.8 : (m'=21)&(a'=0) + 0.2 : (m'=30)&(a'=0);
    [ac] (m=20)&(a=1) -> 0.8 : (m'=21)&(a'=2) + 0.2 : (m'=30)&(a'=2);
    [ac] (m=20)&(a=2) -> 0.8 : (m'=21)&(a'=2) + 0.2 : (m'=30)&(a'=2);
    [ac] (m=21)&(a=0) -> 1 : (m'=22)&(a'=0);
    [ac] (m=21)&(a=1) -> 1.0 : (m'=21)&(a'=2);
    [ac] (m=21)&(a=2) -> 1.0 : (m'=21)&(a'=2);
    [ac] (m=22)&(a=0) -> 1 : (m'=23)&(a'=0);
    [ac] (m=22)&(a=1) -> 1.0 : (m'=22)&(a'=2);
    [ac] (m=22)&(a=2) -> 1.0 : (m'=22)&(a'=2);
    [ac] (m=23)&(a=0) -> 1 : (m'=24)&(a'=0);
    [ac] (m=23)&(a=1) -> 1.0 : (m'=23)&(a'=2);
    [ac] (m=23)&(a=2) -> 1.0 : (m'=23)&(a'=2);
    [ac] (m=24)&(a=0) -> 1 : (m'=25)&(a'=0);
    [ac] (m=24)&(a=1) -> 1.0 : (m'=24)&(a'=2);
    [ac] (m=24)&(a=2) -> 1.0 : (m'=24)&(a'=2);
    [ac] (m=25)&(a=0) -> 1 : (m'=26)&(a'=0);
    [ac] (m=25)&(a=1) -> 1.0 : (m'=25)&(a'=2);
    [ac] (m=25)&(a=2) -> 1.0 : (m'=25)&(a'=2);
    [ac] (m=26)&(a=0) -> 1 : (m'=27)&(a'=0);
    [ac] (m=26)&(a=1) -> 1.0 : (m'=26)&(a'=2);
    [ac] (m=26)&(a=2) -> 1.0 : (m'=26)&(a'=2);
    [ac] (m=27)&(a=0) -> 1 : (m'=28)&(a'=0);
    [ac] (m=27)&(a=1) -> 1.0 : (m'=27)&(a'=2);
    [ac] (m=27)&(a=2) -> 1.0 : (m'=27)&(a'=2);
    [ac] (m=28)&(a=0) -> 1 : (m'=29)&(a'=0);
    [ac] (m=28)&(a=1) -> 1 : (m'=27)&(a'=2);
    [ac] (m=28)&(a=2) -> 1 : (m'=27)&(a'=2);
    [ep] (m=29)&(a=0) -> (m'=29)&(a'=1);
    [ac] (m=29)&(a=1) -> 1.0 : (m'=29)&(a'=1);
    [ac] (m=29)&(a=2) -> 1.0 : (m'=29)&(a'=2);
    [ep] (m=30)&(a=0) -> (m'=30)&(a'=1);
    [ac] (m=30)&(a=1) -> 1.0 : (m'=30)&(a'=2);
    [ac] (m=30)&(a=2) -> 1.0 : (m'=30)&(a'=2);
    [ac] (m=31)&(a=0) -> 1.0 : (m'=31)&(a'=0);
    [ac] (m=31)&(a=1) -> 1.0 : (m'=31)&(a'=2);
    [ac] (m=31)&(a=2) -> 1.0 : (m'=31)&(a'=2);
    [ac] (m=32)&(a=0) -> 1.0 : (m'=32)&(a'=0);
    [ac] (m=32)&(a=1) -> 1.0 : (m'=32)&(a'=2);
    [ac] (m=32)&(a=2) -> 1.0 : (m'=32)&(a'=2);
    [ac] (m=33)&(a=0) -> 1.0 : (m'=33)&(a'=0);
    [ac] (m=33)&(a=1) -> 1.0 : (m'=33)&(a'=2);
    [ac] (m=33)&(a=2) -> 1.0 : (m'=33)&(a'=2);
    [ac] (m=34)&(a=0) -> 1.0 : (m'=34)&(a'=0);
    [ac] (m=34)&(a=1) -> 1.0 : (m'=34)&(a'=2);
    [ac] (m=34)&(a=2) -> 1.0 : (m'=34)&(a'=2);
    [ac] (m=35)&(a=0) -> 1.0 : (m'=35)&(a'=0);
    [ac] (m=35)&(a=1) -> 1.0 : (m'=35)&(a'=2);
    [ac] (m=35)&(a=2) -> 1.0 : (m'=35)&(a'=2);
    [ac] (m=36)&(a=0) -> 1.0 : (m'=36)&(a'=0);
    [ac] (m=36)&(a=1) -> 1.0 : (m'=36)&(a'=2);
    [ac] (m=36)&(a=2) -> 1.0 : (m'=36)&(a'=2);
    [ac] (m=37)&(a=0) -> 1.0 : (m'=37)&(a'=0);
    [ac] (m=37)&(a=1) -> 1.0 : (m'=37)&(a'=2);
    [ac] (m=37)&(a=2) -> 1.0 : (m'=37)&(a'=2);
    [ac] (m=38)&(a=0) -> 1.0 : (m'=38)&(a'=0);
    [ac] (m=38)&(a=1) -> 1.0 : (m'=38)&(a'=2);
    [ac] (m=38)&(a=2) -> 1.0 : (m'=38)&(a'=2);
    [ac] (m=39)&(a=0) -> 1.0 : (m'=39)&(a'=0);
    [ac] (m=39)&(a=1) -> 1.0 : (m'=39)&(a'=2);
    [ac] (m=39)&(a=2) -> 1.0 : (m'=39)&(a'=2);
endmodule

label "a" = (m=0) | (m=1) | (m=2) | (m=3) | (m=4) | (m=5) | (m=6) | (m=7) | (m=8) | (m=29);
label "c" = (m=30);
\end{lstlisting}

\subsection{Rabinizer}
We use Rabinizer 4~\cite{Kretinsky2018RabinizerAutomaton} to transform LTL formulae into LDBAs. The tool can be downloaded on their website~\footnote{https://www7.in.tum.de/~kretinsk/rabinizer4.html}, and we use the 'ltl2ldba' script with '-e' and '-d' options to construct the LDBA with $\epsilon$-transitions from the input LTL formulae.

\end{document}